\documentclass[article]{IEEEtran}

\usepackage{amsmath}  % improve math presentation	
\usepackage{tabularx} % extra features for tabular environment
\usepackage{graphicx} % takes care of graphic including machinery
\usepackage{hyperref}
\usepackage[symbols]{}
\usepackage[epstopdf]{}
\usepackage{amsthm} 
\usepackage{amssymb}
\usepackage{color}
\usepackage{mathtools}
\usepackage{commath}
\usepackage{mathtools}
\usepackage{algorithm}
\usepackage{algpseudocode}
\usepackage{cleveref}
\usepackage{cite} 
\usepackage{soul,comment,xcolor} 
\usepackage{subfiles}
\usepackage{cancel}
\usepackage{booktabs}

\newcommand{\Expc}{\mathbb{E}}

\newtheoremstyle{newstyle}
{} %Aboveskip
{} %Below skip
{\mdseries} %Body font e.g.\mdseries,\bfseries,\scshape,\itshape
{} %Indent
{\bfseries} %Head font e.g.\bfseries,\scshape,\itshape
{.} %Punctuation afer theorem header
{ } %Space after theorem header
{} %Heading

\theoremstyle{newstyle}

\newtheorem{theorem}{Theorem}
\newtheorem{lemma}{Lemma}

\theoremstyle{definition}
\newtheorem{example}{Example}

\newtheorem{assumption}{Assumption}

\theoremstyle{remark}
\newtheorem{remark}{Remark}

\newcommand{\gr}{\nabla} 
\newcommand{\al}{\alpha} 
\newcommand{\eps}{\varepsilon}
\newcommand{\bx}{\mathbf{x}}
\newcommand{\bW}{\mathbf{W}}
\newcommand{\bI}{\mathbf{I}}
\newcommand{\bA}{\mathbf{A}}

\newcommand{\bb}{\mathbf{b}}
\newcommand{\bz}{\mathbf{z}}
\newcommand{\bu}{\mathbf{u}}
\newcommand{\bE}{\mathbb{E}}
\newcommand{\cF}{\mathcal{F}}
\newcommand{\cO}{\mathcal{O}}

\begin{document}

\title{\LARGE \bf
%Quantized Decentralized Consensus Optimization
An Exact Quantized Decentralized Gradient Descent Algorithm 
}

\author{Amirhossein Reisizadeh, Aryan Mokhtari, Hamed Hassani, Ramtin Pedarsani% <-this % stops a space
%\thanks{This work was not supported by any organization}% <-this % stops a space
\thanks{Amirhossein Reisizadeh and Ramtin Pedarsani are with the Department of Electrical and Computer Engineering at 
        University of California, Santa Barbara
        {\tt\small reisizadeh@ucsb.edu,ramtin@ece.ucsb.edu}}%
\thanks{Aryan Mokhtari is with the Department of Electrical and Computer Engineering at the University of Texas at Austin
        {\tt\small mokhtari@austin.utexas.edu}}%
\thanks{Hamed Hassani is with the Department of Electrical and Systems Engineering at University of Pennsylvania
        {\tt\small hassani@seas.upenn.edu}}
\thanks{This work is partially supported by NSF grant CCF-1755808 and the UC Office of President under grant No. LFR-18-548175. The research of H. Hassani is supported by NSF grants 1755707  and 1837253.}
\thanks{A preliminary version of this work is published in the proceedings of the 57th IEEE Conference on Decision and Control, 2018 \cite{reisizadeh2018quantized}.}
}

\maketitle

\begin{abstract}
We consider the problem of decentralized consensus optimization, where the sum of $n$ smooth and strongly convex functions are minimized over $n$ distributed agents that form a connected network. In particular, we consider the case that the communicated local decision variables among nodes are quantized in order to alleviate the communication bottleneck in distributed optimization. We propose the Quantized Decentralized Gradient Descent (QDGD) algorithm, in which nodes update their local decision variables by combining the quantized information received from their neighbors with their local information. We prove that under standard strong convexity and
smoothness assumptions for the objective function, QDGD achieves a vanishing mean solution error under customary conditions for quantizers. To the best of our knowledge, this is the first algorithm that achieves vanishing consensus error in the presence of quantization noise. Moreover, we provide simulation results that show tight agreement between our derived theoretical convergence rate and the numerical results.
\end{abstract}

%!TEX root = main.tex
\section{Introduction}
Distributed optimization of a sum of convex functions has a variety of applications in different areas including decentralized control systems \cite{cao2013overview}, wireless systems \cite{ribeiro2010ergodic}, sensor networks \cite{rabbat2004distributed}, networked multiagent systems \cite{olfati2007consensus}, multirobot networks \cite{ren2007information}, and large scale machine learning \cite{tsianos2012consensus}. In such problems, one aims to solve a consensus optimization problem to minimize $f(\bx) = \sum_{i=1}^{n} f_i(\bx)$
cooperatively over $n$ nodes or agents that form a connected network. The function $f_i( \cdot )$ represents the local cost function of node $i$ that is only known by this node.

Distributed optimization has been largely studied in the literature starting from seminal works in the 80s \cite{tsitsiklis1986distributed,tsitsiklis1984problems}. Since then, various algorithms have been proposed to address decentralized consensus optimization in multiagent systems. The most commonly used algorithms are decentralized gradient descent or gradient projection method \cite{nedic2009distributed,yuan2016convergence,jakovetic2014fast,ram2010distributed}, distributed alternating direction method of multipliers (ADMM) \cite{boyd2011distributed,shi2014linear,mokhtari64dqm},  decentralized dual averaging \cite{duchi2012dual,tsianos2012push}, and distributed Newton-type methods \cite{wei2013distributed,mokhtari2017network,eisen2017decentralized}. Furthermore, the decentralized consensus optimization problem has been considered in online or dynamic settings, where the dynamic cost function becomes an online regret function \cite{yan2013distributed}.

A major bottleneck in achieving fast convergence in decentralized consensus optimization is limited communication bandwidth among nodes. As the dimension of input data increases (which is the current trend in large-scale distributed machine learning),  a considerable amount of information must be exchanged among nodes, over many iterations of the consensus algorithm. This causes a significant communication bottleneck that can substantially slow down the convergence time of the algorithm \cite{seide20141,chowdhury2011managing}.

Quantized communication for the agents is brought into the picture for bounded and stable control systems \cite{yuksel2003quantization}. Furthermore, consensus distributed averaging algorithms are studied under discretized message passing \cite{Kashyap2006QuantizedC}. Motivated by the energy and bandwidth-constrained wireless sensor networks, the work in \cite{rabbat2005quantized} proposes distributed optimization algorithms under quantized variables and guarantees convergence within a non-vanishing error. Deterministic quantization has been considered in distributed averaging algorithms \cite{el2016design} where the iterations converge to a neighborhood of the average of initials. However, randomized quantization schemes are shown to achieve the average of initials, in expectation \cite{aysal2007distributed}. The work in \cite{nedic2008distributed} also considers a consensus distributed optimization problem over a cooperative network of agents restricted to quantized communication. The proposed algorithm guarantees convergence to the optima within an error which depends on the network size and the number of quantization levels. Aligned with the  communication bottleneck described earlier, \cite{gravelle2014quantized} provides a quantized distributed load balancing scheme that converges to a set of desired states while the nodes are constrained to remain under maximum load capacities. 

More recently, 1-Bit SGD \cite{seide20141} was introduced in which at each time step, the agents sequentially quantize their local gradient vectors by entry-wise signs while contributing the quantization error induced in previous iteration. Moreover, in \cite{alistarh2017qsgd}, the authors propose the Quantized-SGD (QSGD), a class of compression scheme algorithms that is based on a stochastic and unbiased quantizer of the vector to be transmitted. QSGD provably provides convergence guarantees, as well a good practical performance. Recently, a different line of work has proposed the use of coding theoretic techniques to alleviate the communication bottleneck in distributed computation \cite{li2016fundamental,lee2016speeding,ezzeldin2017communication,prakash2018coded}. In particular, distributed computing algorithms such as MapReduce require shuffling of data or messages between different phases of computation that incur large communication overhead. The key idea to reducing this communication load is to exploit excess in storage and local computation so that \emph{coded} messages can be sent in the phase of shuffling for reducing the communication load.

In this paper, our goal is to analyze the quantized decentralized consensus optimization problem, where node $i$ transmits a quantized version of its local decision variable $Q(\bx_i)$ to the neighboring nodes instead of the exact decision variable $\bx_i$. Motivated by the stochastic quantizer proposed in \cite{alistarh2017qsgd}, we consider two classes of unbiased random quantizers. While they both share the unbiasedness assumption, i.e. $\mathbb{E} \left[Q(\bx) | \bx \right] = \bx$, the corresponding variance differs for the two classes. We firstly consider variance bounded quantizers in which we have $\mathbb{E} \left[\| Q(\bx) - \bx\|^2 | \bx \right] \leq \sigma^2$ for some fixed constant $\sigma^2$. Furthermore, we consider random quantizers for which the variance is bounded proportionally to the norm squared of the quatizer's input, that is $\bE \left[ \norm{Q(\bx)-\bx}^2 | \bx \right]\leq \eta^2 \norm{\bx}^2 $ for a constant $\eta^2$.

 Our main contribution is to propose a Quantized Decentralized Gradient Descent (QDGD) method, which involves a novel way of updating the local decision variables by combining the quantized message received from the neighbors and the local information such that proper averaging is performed over the local decision variable and the neighbors' quantized vectors. We prove that under standard strong convexity and smoothness assumptions, for any unbiased and variance bounded quantizer, QDGD achieves a vanishing mean solution error: for all nodes $i=1,\dots,n$ we obtain that for any arbitrary $\delta \in (0,1/2)$ and large enough $T$, $\mathbb{E} \left[ \norm{ \bx_{i,T} - \widetilde{\bx}^*}^2 \right] \leq \mathcal{O} \left( \frac{1}{T^{\delta}} \right)$, where  $\bx_{i,T}$ is the local decision variable of node $i$ at iteration $T$ and $\widetilde{\bx}^*$ is the global optimum. To the best of our knowledge, this is the first decentralized gradient-based algorithm that achieves vanishing consensus error in the presence of non-vanishing quantization noise. We further generalize the convergence result to the second class of unbiased quantizers for which the variance is bounded proportionally to the norm squared of the quatizer's input and prove that the propsoed algorithm attains the same convergence rate. We also provide simulation results -- for both synthetic and real data -- that corroborate our theoretical results.
\vspace{2mm}
\\
\textbf{Notation.} In this paper, we denote by $[n]$ the set $\{1,\cdots,n\}$ for any natural number $n \in \mathbb{N}$. The gradient of a function $f(\bx)$ is denoted by $\nabla f(\bx)$. For non-negative functions $g$ and $h$ of $t$, we denote $g(t)=\mathcal{O}(h(t))$ if there exist $t_0 \in \mathbb{N}$ and constant $c$ such that  $g(t) \leq ch(t)$ for any $t \geq t_0$. We use $\lceil x \rceil$ to indicate the least integer greater than or equal to $x$.
\vspace{2mm}
\\
\textbf{Paper Organization.} The rest of the paper is organized as follows. In Section \ref{sec:problem}, we precisely formulate the quantized decentralized consensus optimization problem. We provide the description of the Quantized Decentralized Gradient Descent algorithm in Section \ref{sec:algorithm}. The main theorems of the paper are stated and proved in Section \ref{sec:convergence}. In Section \ref{sec:extension}, we study the trade-off between communication cost and accuracy of the algorithm. We provide numerical studies in Section \ref{sec:simulation}. Finally, we conclude the paper and discuss future directions in Section \ref{sec:conclusion}.

%!TEX root = main.tex

\section{Problem Formulation}\label{sec:problem}
In this section, we formally define the consensus optimization problem that we aim to solve. Consider a set of $n$ nodes that communicate over a connected and undirected graph $\mathcal{G}=(\mathcal{V},\mathcal{E})$ where $\mathcal{V}=\{1,\cdots,n\}$ and $\mathcal{E} \subseteq \mathcal{V} \times \mathcal{V}$ denote the set of nodes and edges, respectively. We assume that nodes are only allowed to exchange information with their neighbors and use the notation $\mathcal{N}_i$ for the set of node $i$'s neighbors. In our setting, we assume that each node $i$ has access to a local convex function $f_{i}: \mathbb{R}^p\to  \mathbb{R}$, and nodes in the network cooperate to minimize the aggregate objective function $f: \mathbb{R}^p\to  \mathbb{R}$ taking values $f(\bx)=\sum_{i=1}^nf_{i}(\bx)$. In other words, nodes aim to solve the optimization problem
%%%%%
\begin{equation} \label{eq:main}
     \operatorname*{\text{min}}_{\bx \in \mathbb{R}^p}\ f(\bx) \ =\ \operatorname*{\text{min}}_{\bx \in \mathbb{R}^p}\ \sum_{i=1}^{n} f_i(\bx).
\end{equation}
%%%%%
We assume the local objective functions $f_i$ are strongly convex and smooth, and, therefore, the aggregate function $f$ is also strongly convex and smooth. In the rest of the paper, we use $\widetilde{\bx}^*$ to denote the unique minimizer of Problem~\eqref{eq:main}.

In decentralized settings, nodes have access to a single summand of the global objective function $f$ and to reach the optimal solution $\widetilde{\bx}^*$, communication with neighboring nodes is inevitable. To be more precise, nodes need to minimize their local objective functions, while they ensure that their local decision variables are equal to their neighbors'. This interpretation leads to an equivalent formulation of Problem~\eqref{eq:main}. If we define $\bx_i$ as the decision variable of node~$i$, the alternative formulation of Problem~\eqref{eq:main} can be written as 
%%%%%
\begin{align} \label{eq:main_2}
   & \operatorname*{\text{min}}_{\bx_1,\dots,\bx_n \in \mathbb{R}^p}\ \sum_{i=1}^{n} f_i(\bx_i) \nonumber\\
   & \ \text{subject to} \quad \bx_i=\bx_j,\qquad  \text{for all}\ i, \ j\in \mathcal{N}_i.
\end{align}
%%%%%
Since we assume that the underlying network is a connected graph, the constraint in \eqref{eq:main_2} implies that any feasible solution should satisfy $\bx_1=\dots=\bx_n$. Under this condition the objective function values in \eqref{eq:main} and \eqref{eq:main_2} are equivalent. Hence, it follows that the optimal solutions of Problem \eqref{eq:main_2} are equal to the optimal solution of Problem \eqref{eq:main}, i.e., if we denote $\{\bx_i^*\}_{i=1}^n$ as the optimal solutions of Problem \eqref{eq:main_2} it holds that $\bx_1^*=\dots=\bx_n^*=\widetilde{\bx}^*$. Therefore, we proceed to solve Problem \eqref{eq:main_2} which is naturally formulated for decentralized optimization in lieu of Problem \eqref{eq:main}. 

The problem formulation in \eqref{eq:main_2} suggests that each node $i$ should minimize its local objective function $f_i$ while keeping its decision variable $\bx_i$ close to the decision variable $\bx_j$ of its neighbors $j\in\mathcal{N}_i$. This goal can be achieved by exchanging local variables $\bx_i$ among neighboring nodes to enforce consensus on the decision variables. Indeed, exchange of updated local vectors between the distributed nodes induces a potentially heavy communication load on the shared bus. To address this issue, we assume that each node provides a randomly quantized variant of its local updated variable to the neighboring nodes. That is, if we denote by $\bx_i$ the decision variable of node $i$, then the corresponding quantized variant $\bz_i=Q(\bx_i)$ is communicated to the neighboring nodes, $\mathcal{N}_i$. Exchanging quantized vectors $\bz_i$ instead of the true vectors $\bx_i$ indeed reduces the communication burden at the cost of injecting noise to the information received by the nodes in the network. The main challenge in this setting is to ensure that nodes can still converge to the optimal solution of Problem~\eqref{eq:main_2}, while they only have access to a quantized variant of their neighbors' true decision variables.

\section{QDGD Algorithm}\label{sec:algorithm}
In this section, we propose a quantized gradient based method to solve the decentralized optimization problem in \eqref{eq:main_2} and consequently the original problem in \eqref{eq:main} in a fully decentralized fashion. To do so, consider $\bx_{i,t}$ as the decision variable of node $i$ at step $t$ and  $\bz_{i,t} = Q(\bx_{i,t})$ as the quantized version of the vector $\bx_{i,t}$. In the proposed Quantized Decentralized Gradient Descent (QDGD) method, nodes update their local decision variables by combining the quantized information received from their neighbors with their local information. To formally state the update of QDGD, we first define $w_{ij}$ as the weight that node $i$ assigns to node $j$. If nodes $i$ and $j$ are not neighbors then $w_{ij}=0$, and if they are neighbors the weight $w_{ij}\geq0$ is nonnegative. At each time step $t$, each node $i$ sends its quantized $\bz_{i,t}$ variant of its local vector $\bx_{i,t} $ to its neighbors $j\in \mathcal{N}_i$ and receives their corresponding vectors $\bz_{j,t}$. Then, using the received information it updates its local decision variable according to the update  
%%%%%
\begin{equation}\label{eq:alg_update}
    \bx_{i,t+1} =  (1 - \eps + \eps w_{ii}) \bx_{i,t} + \eps \sum_{ j \in  \mathcal{N}_i } w_{ij} \bz_{j,t} - \al \eps \gr f_i(\bx_{i,t}),
\end{equation}
%%%%
where $\eps$ and $\alpha$ are positive step-sizes.

The update of QDGD in \eqref{eq:alg_update} shows that the updated iterate is a linear combination of the weighted average of node $i$'s  neighbors' decision variable, i.e.,  $\eps\sum_{ j \in  \mathcal{N}_i } w_{ij} \bz_{j,t}$, and its local variable $\bx_{i,t}$ and  gradient ${\gr} f_i(\bx_{i,t})$. The parameter $\alpha$ behaves as the stepsize of the gradient descent step with respect to local objective function and the parameter $\eps$ behaves as an averaging parameter between performing the distributed gradient update  $\eps( w_{ii}\bx_{i,t} +  \sum_{ j \in  \mathcal{N}_i } w_{ij} \bz_{j,t} - \al  {\gr} f_i(\bx_{i,t}))$ and using the previous decision variable $(1-\eps)\bx_{i,t}$. By choosing a diminishing stepsize $\alpha$ and averaging using the parameter $\eps$ we control randomness induced by exchanging quantized variables. The steps of the proposed QDGD method are summarized in Algorithm \ref{alg:QDC}.

\begin{algorithm}[t]
\caption{QDGD at node $i$}\label{alg:QDC}
\begin{algorithmic}[1]
\Require Weights $\{w_{ij}\}_{j=1}^n$, total iterations $T$
\State Set $\bx_{i,0}=0$ and compute $\bz_{i,0}=Q(\bx_{i,0})$ 
\For{$t=0,\cdots,T-1$}
	\State Send $\bz_{i,t}=Q(\bx_{i,t})$ to $j\in\mathcal{N}_i$ and receive $\bz_{j,t}$
        \State Compute  $\bx_{i,t+1}$ according to the update in \eqref{eq:alg_update}
\EndFor
\State \Return $\bx_{i,T}$
\end{algorithmic}
\end{algorithm}

\begin{remark}
The proposed QDGD algorithm can be interpreted as a variant of the decentralized (sub)gradient descent (DGD) method \cite{nedic2009distributed,yuan2016convergence} for quantized decentralized optimization (see Section \ref{sec:convergence}). Note that the vanilla DGD method converges to a neighborhood of the optimal solution in the presence of quantization noise where the radius of convergence depends on the variance of quantization error \cite{nedic2009distributed,yuan2016convergence,rabbat2005quantized,nedic2008distributed}. QDGD improves the inexact convergence of quantized DGD by modifying the contribution of quantized information received from neighboring noise as described in update~\eqref{eq:alg_update}. In particular, as we show in Theorem \ref{thm1}, the sequence of iterates generated by QDGD converges to the optimal solution of Problem \eqref{eq:main} in expectation.
\end{remark}

Note that the proposed QDGD algorithm does not restrict the quantizer, except for few customary conditions. However, design of efficient quantizers has been taken into consideration. Consider the following example as such quantizers.

\begin{example} \label{ex:q1}
Consider a low-precision representation specified by $\gamma \in \mathbb{R}$ and $b \in \mathbb{N}$. The range representable by scale factor $\gamma$ and $b$ bits is
$ \{ -\gamma \cdot 2^{b-1}, \cdots, -\gamma, 0, \gamma, \cdots, \gamma \cdot (2^b-1) \}$. For any $k\gamma \leq x < (k+1)\gamma$ in the representable range, the low-precision quantizer outputs
%%%%%%
\begin{equation}
    Q_{(\gamma,b)}(x) =
    \left\{
	\begin{array}{ll}
		k\gamma  & \mbox{w.p. } 1-\frac{x-k\gamma}{\gamma}, \\
		(k+1)\gamma & \mbox{w.p. } \frac{x-k\gamma}{\gamma}.
	\end{array}
    \right.
\end{equation}
%%%%%
For any $x$ in the range, the quantizer is unbiased and variance bounded, i.e.  $\bE\left[Q_{(\gamma,b)}(x)\right]=x$ and $\bE \left[ \norm{Q_{(\gamma,b)}(x) - x}^2 \right]\leq\frac{\gamma^2}{4}$. 
\end{example}

In Section~\ref{sec:convergence}, we formally state the required conditions for the quantization scheme used in QDGD and show that a large class of well-known quantizers satisfy the required conditions. 

% except for few customary conditions. However, design of efficient quantizers has been taken into consideration. For instance as in \cite{alistarh2017qsgd}, for any $\bx \in \mathbb{R}^p$, a random quantization vector $Q(\bx)=[Q_1(\bx),\cdots,Q_p(\bx)]^\top$ is defined in its simplest variant as follows,
%%%%%%%
%\begin{equation}
%    Q_i(\bx) = \norm{\bx} \cdot \text{sgn}(\bx_i)\cdot \xi_i(\bx),
%\end{equation}
%%%%%%
%where $\xi_i(\bx)$'s are independent Bernoulli random variables with parameter $|\bx_i| / {\norm{\bx}}$, i.e., each entry of the quantized vector takes on value in $\{0, \pm \norm{\bx} \}$. Generalized to more quantization levels, this quantizer is unbiased, variance bounded and sparse.

%!TEX root = main.tex

\section{Convergence Analysis}\label{sec:convergence}

In this section, we prove that for sufficiently large number of iterations, the sequence of local iterates generated by QDGD converges to an arbitrarily precise approximation of the optimal solution of Problem~\eqref{eq:main_2} and consequently Problem~(\ref{eq:main}). The following assumptions hold throughout the analysis of the algorithm.
%%%%%
\begin{assumption}\label{assump1}
Local objective functions $f_i$ are differentiable and smooth with parameter $L$, i.e.,
%%%%%
\begin{equation}
    \norm{\gr f_i(\bx) - \gr f_i(\mathbf{y})} \leq L \norm{\bx - \mathbf{y}}, 
\end{equation}
%%%%%
for any $\bx, \mathbf{y} \in \mathbb{R}^{p}$. \footnote{Local objectives may have different smoothness parameters, however, WLOG one can consider the largest smoothness parameter as the one for all the objectives.}
\end{assumption}

\begin{assumption}\label{assump2}
Local objective functions $f_i$ are strongly convex with parameter $\mu$, i.e.,
%%%%%
\begin{equation}
    \langle \gr f_i(\bx) - \gr f_i(\mathbf{y}) , \bx - \mathbf{y} \rangle \geq \mu \norm{\bx - \mathbf{y}}^2, 
\end{equation}
%%%%%
for any $\bx, \mathbf{y} \in \mathbb{R}^{p}$.\footnote{Local objectives may have different strong convexity parameters, however, WLOG one can consider the smallest strong convexity parameter as the one for all the objectives.}

\end{assumption}

\begin{assumption}\label{assump3}
The random quantizer $Q(\cdot)$ is unbiased and has a bounded variance, i.e.,
%%%%%%
\begin{equation}
    \bE \left[Q(\bx)| \bx \right]=\bx, \quad \text{ and } \quad \bE \left[ \norm{Q(\bx)-\bx}^2 | \bx \right]\leq \sigma^2,
\end{equation}
%%%%%%
for any $\bx \in \mathbb{R}^{p}$; and quantizations are carried out independently on distributed nodes.
\end{assumption}

\begin{assumption}\label{assump4}
The weight matrix $W \in \mathbb{R}^{n\times n}$  with entries $w_{ij}$ satisfies the following conditions 
\begin{equation}
W = W^\top, \quad W \mathbf{1}=\mathbf{1},\quad \text{ and }  \quad \text{null}(I-W)= \text{span}(\mathbf{1}).
\end{equation}
%%%
% is symmetric and stochastic with eigenvalues in $[0,1]$ and positive spectral gap.
\end{assumption}

The conditions in Assumptions \ref{assump1} and \ref{assump2} imply that the global objective function $f$ is strongly convex with parameter $\mu$ and its gradients are Lipschitz continuous with constant $L$. Assumption \ref{assump3} poses two customary conditions on the quantizer, that are unbiasedness and variance boundedness. Assumption \ref{assump4} implies that weight matrix $W$ is symmetric and  doubly stochastic. The largest eigenvalue of $W$ is $\lambda_1(W)=1$ and all the eigenvalues belong to $(-1,1]$, i.e., the ordered sequence of eigenvalues of $W$ are $1=\lambda_1(W) \geq \lambda_2(W) \geq \cdots \geq \lambda_n(W)  > -1$. We denote by $1-\beta$ the spectral gap associated to the stochastic matrix $W$, where $\beta=\max \left\{|\lambda_2(W)|,|\lambda_n(W)| \right\}$ is the second largest magnitude of the eigenvalues of matrix $W$. It is also customary to assume $\text{rank}(I-W)=n-1$ such that $\text{null}(I-W) = \text{span}(\mathbf{1})$. We let $W_D$ denote the diagonal matrix consisting of the diagonal entries of $W$, i.e. $\{w_{11},\cdots,w_{nn}\}$.

\iffalse
\begin{example}
Let $\gamma \in \mathbb{R}$ and $b \in \mathbb{N}$. The range representable by the tuple $(\gamma,b)$ is
\begin{equation}
\text{dom}(\gamma,b) = \{ -\gamma \cdot 2^{b-1}, \cdots, -\gamma, 0, \gamma, \cdots, \gamma \cdot (2^b-1) \}.
\end{equation}
Let $z$ be the rounded-down quantization of $w$. Then $Q_{(\gamma,b)}$ will round to $z + \gamma$ (the rounded-up quantization of $w$) with probability $\frac{w-z}{\gamma}$, and it will round to $z$ with probability $1-\frac{w-z}{\gamma}$.
For any $w$ in the range, the quantizer is unbiased and variance bounded by constant $\frac{\gamma^2}{4}$.
\end{example}
\fi

In the following theorem we show that the local iterations generated by QDGD converge to the global optima, as close as desired.

\begin{theorem}\label{thm1}
Consider the distributed consensus optimization Problem (\ref{eq:main}) and suppose Assumptions \ref{assump1}--\ref{assump4} hold. Consider $\delta$ as an arbitrary scalar in $(0,1/2)$ and set $\eps=\frac{c_1}{T^{3\delta/2}}$ and $\al = \frac{c_2}{T^{\delta/2}}$ where $c_1$ and $c_2$ are arbitrary positive constants (independent of $T$). Then, for each node $i$, the expected difference between the output of Algorithm~\ref{alg:QDC} after $T$ iterations and the solution of Problem \eqref{eq:main}, i.e. $\widetilde{\bx}^*$ is upper bounded by
%%%%
\begin{align}\label{eq:main_thm_result}
\bE \Big[\norm{\bx_{i,T} - \widetilde{\bx}^*}^2\Big] &\leq \mathcal{O} \Bigg( \Bigg(  \frac{4n c^2_2 D^2 \left(3+ 2L/\mu\right)^2 }{(1-\beta)^2} \nonumber\\
 &\quad  \quad  \quad   + \frac{2 c_1 { n} \sigma^2 \norm{W\!-\!W_D}^2}{\mu c_2} \Bigg)\frac{1}{T^{\delta}} \Bigg),
%\cO\left(  \frac{(1-\beta)^{-2}+{ n}\sigma^2 \norm{W-W_D}^2}{T^{\frac{1-\delta}{2}}}\right),
\end{align}
%%%%
if the total number of iterations satisfies $T\geq T_0$, where $T_0$ is a function of $\delta$, $c_1$, $c_2$, $\mu$, $L$, and $\lambda_n(W)$. Moreover,
%%%%%
\begin{align}
	D^2 = 2L  \sum_{i=1}^{n} \left( f_i(0) - f^*_i \right), \quad f^*_i =  \min_{\bx \in \mathbb{R}^p} f_i(\bx).
\end{align}
%%%%%
\end{theorem}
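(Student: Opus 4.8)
The plan is to adopt a matrix-form analysis of the QDGD update by stacking the local variables into a single vector. Let me write the concatenated iterate $\bx_t = [\bx_{1,t}^\top, \dots, \bx_{n,t}^\top]^\top$, the concatenated quantization $\bz_t = Q(\bx_t)$, and the stacked gradient $\gr F(\bx_t) = [\gr f_1(\bx_{1,t})^\top, \dots, \gr f_n(\bx_{n,t})^\top]^\top$. With $\mathbf{Z} = W - W_D$ capturing the off-diagonal weights acting on the quantized messages and $W_D$ the diagonal part multiplying $\bx_t$, the update~\eqref{eq:alg_update} becomes, in Kronecker-product notation over $\mathbb{R}^{np}$,
\begin{equation*}
\bx_{t+1} = \big((1-\eps)I + \eps W_D\big)\bx_t + \eps \mathbf{Z}\, \bz_t - \al\eps\, \gr F(\bx_t).
\end{equation*}
The first step is to use unbiasedness (Assumption~\ref{assump3}) to write $\bz_t = \bx_t + \mathbf{e}_t$ where $\bE[\mathbf{e}_t\mid \bx_t]=0$ and $\bE[\|\mathbf{e}_t\|^2\mid \bx_t]\le n\sigma^2$, so that in conditional expectation the noise term decouples: the update acts like the deterministic recursion $\bx_{t+1} = \big((1-\eps)I + \eps W\big)\bx_t - \al\eps\,\gr F(\bx_t)$ plus a zero-mean perturbation $\eps\mathbf{Z}\mathbf{e}_t$ whose second moment is controlled by $\eps^2\|W-W_D\|^2 n\sigma^2$. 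This identifies the effective mixing matrix $\tilde W := (1-\eps)I+\eps W$, which is symmetric, doubly stochastic, and whose spectral gap is $\eps(1-\beta)$.

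Next I would reduce the problem to analyzing a fixed point of the associated deterministic DGD-type map. The key observation is that QDGD is exact DGD with step size $\al\eps$ on the penalized objective using mixing matrix $\tilde W$; its fixed point $\bx^\sharp$ solves $(I-\tilde W)\bx^\sharp = -\al\eps\,\gr F(\bx^\sharp)$, i.e. $\eps(I-W)\bx^\sharp = -\al\eps\,\gr F(\bx^\sharp)$, so $\bx^\sharp$ is independent of $\eps$ and depends only on the ratio through $\al$. I would split the error as $\bx_{i,T}-\widetilde{\bx}^*$ into (i) the distance from the running iterate to $\bx^\sharp$ (an optimization/contraction term), (ii) the consensus gap of $\bx^\sharp$ (how far its blocks are from their average), and (iii) the bias of the averaged fixed point from $\widetilde{\bx}^*$ induced by the penalty parameter $\al$. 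Term (ii)--(iii) together form the familiar inexactness of DGD: bounding the consensus error uses $\|\tilde W - \frac1n\mathbf{1}\mathbf{1}^\top\|\le 1-\eps(1-\beta)$ together with the gradient bound $\|\gr F(\bx^\sharp)\|$, which is where the constant $D^2=2L\sum_i(f_i(0)-f_i^*)$ enters via an a priori bound on the iterates derived from smoothness and the initialization $\bx_{i,0}=0$. This is what produces the first bracketed term, scaling like $\al^2 n D^2(3+2L/\mu)^2/(1-\beta)^2$.

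The contraction step (i) is where the two step sizes do their work. I would establish a one-step inequality of the form
\begin{equation*}
\bE\big[\|\bx_{t+1}-\bx^\sharp\|^2\mid \bx_t\big] \le (1-\rho)\|\bx_t-\bx^\sharp\|^2 + \eps^2\|W-W_D\|^2 n\sigma^2,
\end{equation*}
where the contraction factor $\rho$ combines the strong convexity $\mu$ acting through the gradient term (contributing order $\al\eps\mu$) and the mixing gap (contributing order $\eps(1-\beta)$); smoothness $L$ bounds the expansive quadratic terms so that the scaling $\eps=c_1 T^{-3\delta/2}$, $\al=c_2 T^{-\delta/2}$ makes $\al\eps = c_1c_2 T^{-2\delta}$ dominate the noise variance $\eps^2 = c_1^2 T^{-3\delta}$. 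Unrolling this linear recursion gives a transient term that decays plus a steady-state term of order $\eps^2 n\sigma^2\|W-W_D\|^2/\rho$; with $\rho \sim \al\eps\mu$ this steady-state piece is order $\eps n\sigma^2\|W-W_D\|^2/(\al\mu) = c_1 n\sigma^2\|W-W_D\|^2/(c_2\mu)\cdot T^{-\delta}$, matching the second bracketed term up to the stated factor of $2$. The main obstacle I anticipate is the bookkeeping of how $\al$ and $\eps$ enter the contraction factor $\rho$ simultaneously: I must verify that the cross terms arising from expanding $\|\bx_{t+1}-\bx^\sharp\|^2$ — in particular the interaction between the mixing operator $\tilde W$ and the gradient-descent operator — do not overwhelm the net contraction, and that the requirement $T\ge T_0$ is precisely what guarantees $\eps$ and $\al$ are small enough for $\tilde W$ to be positive definite and for $\rho\in(0,1)$. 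Carefully tracking these competing $T$-powers, rather than any single estimate, is the delicate part of the argument.
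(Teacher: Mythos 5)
Your proposal is correct and follows essentially the same route as the paper: your fixed point $\bx^\sharp$, defined by $(\bI-\bW)\bx^\sharp+\al\,\gr F(\bx^\sharp)=0$, is exactly the paper's minimizer $\bx^*_{\al}$ of the penalized objective $h_{\al}(\bx)=\frac{1}{2}\bx^\top(\bI-\bW)\bx+\al F(\bx)$, and your two-part decomposition (stochastic contraction of the iterates to $\bx^\sharp$ with steady-state error $\cO\left(\eps n\sigma^2\norm{W-W_D}^2/(\al\mu)\right)$, plus the DGD-type bias $\cO\left(\al/(1-\beta)\right)$ of $\bx^\sharp$ from $\bx^*$) mirrors the paper's Lemma \ref{lemma1} and Lemma \ref{lemma2} with matching constants. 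The only cosmetic difference is that you phrase the recursion via the effective mixing matrix $\tilde W=(1-\eps)I+\eps W$ rather than as SGD on $h_{\al}$ with step size $\eps$, which are identical reformulations.
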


Theorem \ref{thm1} demonstrates that the proposed QDGD provides an approximation solution with vanishing deviation from the optimal solution, despite the fact that the quantization noise does not vanish as the number of iterations progresses. 

By the first glance at the expression in \eqref{eq:main_thm_result} one might suggest to set $\delta=1/2$ to obtain the best possible sublinear convergence rate which is $ \cO \left( \frac{1}{T^{1/2}} \right)$. However, $T_0$, which is a lower bound on the total number of iterations $T$, is an increasing function of $1/(1-2\delta)$, and by choosing $\delta$ very close to $1/2$, the total number of iterations $T$ should be very large to obtain a fast convergence rate close to $ \cO\left( \frac{1}{T^{1/2}}\right)$. Therefore, there is a trade-off between the convergence rate and the minimum number of required iterations. By setting $\delta$ close to $1/2$ we obtain a fast convergence rate but at the cost of running the algorithm for a large number of iterations, and by selecting $\delta$ close to $0$ the lower bound on the total number of iterations becomes smaller at the cost of having a slower convergence rate. {{We will illustrate this trade-off in the numerical experiments.}}

Moreover, note that the result in \eqref{eq:main_thm_result} shows a balance between the variance of quantization and the mixing matrix. To be more precise, if the variance of quantization $\sigma^2$ is small nodes should assign larger weights to their neighbors which decreases $(1-\beta)^{-2}$ and increases $\|W-W_D\|^2$. Conversely, when the variance $\sigma^2$ is large, to balance the terms in \eqref{eq:main_thm_result} nodes should assign larger weights to their local decision variables which decreases the term $\|W-W_D\|^2$ and increases $(1-\beta)^{-2}$. 

\subsection{Proof of Theorem \ref{thm1}}
To analyze the proposed QDGD method, we start by rewriting the update rule (\ref{eq:alg_update}) as follows
%%%%%
\begin{equation} \label{eq:rule2}
    \bx_{i,t+1} = \bx_{i,t}- \eps \Big( (1-w_{ii}) \bx_{i,t} -\sum_{ j \neq i} w_{ij} \bz_{j,t} + \al \gr f_i(\bx_{i,t}) \Big).
\end{equation}
%%%%%
Note that to derive the expression in \eqref{eq:rule2}, we simply use the fact that $w_{ij}=0$ when $j\notin \mathcal{N}_i$. 

The next step is to write the update \eqref{eq:rule2} in a matrix form. To do so, we define the function $F:\mathbb{R}^{np} \to \mathbb{R}$ as $F(\bx)=\sum_{i=1}^{n} f_i(\bx_{i})$ where $\bx_i \in \mathbb{R}^{p}$ and $\bx = [\bx_1;\cdots;\bx_n] \in \mathbb{R}^{np}$ is the concatenation of the local variables $\bx_i$. It is easy to verify that the gradient of the function $F$ is the concatenation of local gradients evaluated at the local variable, that is $\gr F(\bx_t)=[ \gr f_1(\bx_{1,t});\cdots;\gr f_n(\bx_{n,t})]$. We also define the matrix $\bW = W \otimes I \in \mathbb{R}^{np \times np}$ as the Kronecker product of the weight matrix $W \in \mathbb{R}^{n \times n}$ and the identity matrix $I \in \mathbb{R}^{p \times p}$. Similarly, define $\bW_D = W_D \otimes I \in \mathbb{R}^{np \times np}$, where $W_D=[w_{ii}] \in \mathbb{R}^{n \times n}$ denotes the diagonal matrix of the entries on the main diagonal of $W$. For the sake of consistency, we denote by the boldface $\bI$ the identity matrix of size $np$. According to above definitions, we can write the concatenated version of (\ref{eq:rule2}) as follows,
\begin{equation} \label{eq:rule-matrix}
    \bx_{t+1} = \bx_{t} - \eps \Big(  \big(\bI - \bW_D \big) \bx_t+\big(\bW_{D} - \bW\big) \bz_{t}  + \al \gr F(\bx_{t}) \Big).
\end{equation}

%{{Proposed by Algorithm \ref{alg:QDC}, each local node starts with initializing its local variable to $0$, that is $\bx_{i,0}=0$ for all $i \in [n]$, and iterates according to the algorithm. We let $\bx_{i,t}\in \mathbb{R}^{np}$ and $\bz_{i,t}=Q(\bx_{i,t})\in \mathbb{R}^{np}$ denote the local decision variable of node $i$ at time step $t$ and the corresponding quantized variant, respectively. We also denote by $\bx_t =[\bx_{1,t};\dots;\bx_{n,t}]\in \mathbb{R}^{np}$ the  concatenation of the local variables, $\bz_t =[\bz_{1,t};\dots;\bz_{n,t}]\in \mathbb{R}^{np}$ the  concatenation of the quantized local variables, and $\bx^* =[\widetilde{\bx}^*;\dots;\widetilde{\bx}^*]\in \mathbb{R}^{np}$ the  concatenation of the solution to Problem (\ref{eq:main}).}}

As we discussed in Section \ref{sec:problem}, the distributed consensus optimization Problem (\ref{eq:main}) can be equivalently written as Problem (\ref{eq:main_2}). The constraint in the latter restricts the feasible set to the consensus vectors, that is $\{ \bx=[\bx_1;\cdots;\bx_n] : \bx_1=\cdots=\bx_n \}$. According to the discussion on rank of the weight matrix $W$, the null space of the matrix $I-W$ is $\text{null}(I - W)=\text{span}(\mathbf{1})$. Hence, the null space of $\bI-\bW$ is the set of all consensus vectors, i.e., $\bx \in \mathbb{R}^{np}$ is feasible for Problem (\ref{eq:main_2}) if and only if $(\bI-\bW)\bx=0$, or equivalently $(\bI-\bW)^{1/2}\bx=0$. Therefore, the alternative Problem (\ref{eq:main_2}) can be compactly represented as the following linearly-constrained problem,
%%%%%
\begin{equation}\label{eq:Fmin}
\begin{aligned}
& \underset{ \bx \in \mathbb{R}^{np}}{\text{min}}
& & F(\bx) =  \sum_{i=1}^{n} f_i(\bx_i) \\
& \text{subject to}
& & (\bI-\bW )^{1/2}\bx=0.
\end{aligned}
\end{equation}
%%%%%
We denote by $\bx^* =[\widetilde{\bx}^*;\dots;\widetilde{\bx}^*]$ the unique solution to (\ref{eq:Fmin}).

Now, for given penalty parameter $\al > 0$, one can define the quadratic penalty function corresponding to the linearly constraint problem (\ref{eq:Fmin}) as follows,
%%%%%
\begin{equation} \label{eq:h}
    h_{\al}(\bx)= \frac{1}{2} \bx^\top \big(\bI - \bW \big) \bx + \al  F(\bx).
\end{equation}
%%%%%
Since $\bI - \bW$ is a positive semi-definite  matrix and $F$ is $L$-smooth and $\mu$-strongly convex, the function $h_{\al}$ is $L_{\alpha}$-smooth and $\mu_{\alpha}$-strongly convex on $\mathbb{R}^{np}$ having $L_{\al}= 1-\lambda_n(W) + \al L $ and $\mu_{\al} = \al \mu$. We denote by $\bx^*_{\al}$ the unique minimizer of $h_{\al}(\bx)$, i.e.,
%%%%%
\begin{equation} \label{eq:hmin}
    \bx^*_{\al} = \operatorname*{\text{arg}\,\text{min}}_{\bx \in \mathbb{R}^{np}} h_{\al}(\bx) = \operatorname*{\text{arg}\,\text{min}}_{\bx \in \mathbb{R}^{np}} \frac{1}{2} \bx^\top \big(\bI - \bW \big) \bx + \al  F(\bx).
\end{equation}
%%%%%

In the following, we link the solution of Problem (\ref{eq:hmin}) to the local variable iterations provided by Algorithm \ref{alg:QDC}. Specifically, for sufficiently large number of iterations $T$, we demonstrate that for proper choice of step-sizes, the expected squared deviation of $\bx_{T}$ from $\bx^*_{\al}$ vanishes sub-linearly. This result follows from the fact that the expected value of the descent direction in \eqref{eq:rule-matrix} is an unbiased estimator of the gradient of the function $ h_{\al}(\bx)$. 

 \begin{lemma} \label{lemma1}
 Consider the optimization Problem (\ref{eq:hmin}) and suppose Assumptions \ref{assump1}--\ref{assump4} hold. Then, the expected deviation of the output of QDGD from the solution to Problem (\ref{eq:hmin}) is upper bounded by
%%%%
 \begin{equation}\label{eq:lemma1}
 \Expc \Big[\norm{\bx_T - \bx^*_{\al}}^2 \Big] \leq \mathcal{O} \left( \frac{c_1 { n} \sigma^2 \norm{W\!-\!W_D}^2}{\mu c_2} \frac{1}{T^{\delta}} \right),
 %\mathcal{O} \left( \frac{{ n}\sigma^2 \norm{W-W_D}^2_F}{T^{\frac{1-\delta}{2}}} \right),
\end{equation}
%%%%
for $\eps=\frac{c_1}{T^{3\delta/2}}$, $\al = \frac{c_2}{T^{\delta/2}}$, any $\delta \in (0,1/2)$ and $T \geq T_1$, where $c_1$ and $c_2$ are positive constants independent of $T$, and 
\begin{equation}
 T_1 \coloneqq \text{max} \left\{  e^{e^{\frac{1}{1-2\delta}}},  \left\lceil \left( c_1 c_2 \mu \right)^{\frac{1}{2\delta}} \right\rceil , \left\lceil \left( \frac{c_1 (2+c_2 L)^2}{c_2 \mu} \right)^{\frac{1}{\delta}} \right\rceil \right\}. 
 \end{equation}
 \end{lemma}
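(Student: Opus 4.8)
The plan is to recognize that the matrix-form QDGD update \eqref{eq:rule-matrix} is precisely a stochastic gradient descent step with step-size $\eps$ on the penalty function $h_\al$ of \eqref{eq:h}, and then to run the standard strongly-convex SGD recursion. Writing the descent direction as
\[
\mathbf{g}_t = \big(\bI - \bW_D\big)\bx_t + \big(\bW_D - \bW\big)\bz_t + \al\,\gr F(\bx_t),
\]
so that $\bx_{t+1} = \bx_t - \eps\,\mathbf{g}_t$, the first step is to use unbiasedness of the quantizer (Assumption \ref{assump3}), $\bE[\bz_t \mid \bx_t] = \bx_t$, to obtain
\[
\bE[\mathbf{g}_t \mid \bx_t] = \big(\bI - \bW\big)\bx_t + \al\,\gr F(\bx_t) = \gr h_\al(\bx_t).
\]
Thus $\mathbf{g}_t$ is an unbiased estimate of $\gr h_\al(\bx_t)$, and $\bx^*_\al$ is the unique minimizer with $\gr h_\al(\bx^*_\al)=0$.

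Next I would expand the squared distance, $\norm{\bx_{t+1}-\bx^*_\al}^2 = \norm{\bx_t - \bx^*_\al}^2 - 2\eps\langle \mathbf{g}_t, \bx_t - \bx^*_\al\rangle + \eps^2\norm{\mathbf{g}_t}^2$, and take expectation conditioned on $\bx_t$. The cross term contributes $\langle \gr h_\al(\bx_t), \bx_t - \bx^*_\al\rangle \geq \mu_\al\norm{\bx_t - \bx^*_\al}^2$ by $\mu_\al$-strong convexity of $h_\al$. For the second moment I would split $\mathbf{g}_t = \gr h_\al(\bx_t) + \mathbf{n}_t$ with $\mathbf{n}_t = (\bW_D - \bW)(\bz_t - \bx_t)$; since $\bE[\mathbf{n}_t \mid \bx_t]=0$ the cross term vanishes and $\bE[\norm{\mathbf{g}_t}^2\mid\bx_t] = \norm{\gr h_\al(\bx_t)}^2 + \bE[\norm{\mathbf{n}_t}^2\mid\bx_t]$. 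Smoothness of $h_\al$ gives $\norm{\gr h_\al(\bx_t)}^2 \leq L_\al^2\norm{\bx_t-\bx^*_\al}^2$, while the noise term is bounded by $\norm{W-W_D}^2\,n\sigma^2$ using $\norm{\bW - \bW_D} = \norm{W - W_D}$ and summing the per-node variance bound over the $n$ coordinates. With $a_t = \Expc[\norm{\bx_t - \bx^*_\al}^2]$, taking total expectation yields the scalar recursion
\[
a_{t+1} \leq \big(1 - 2\eps\mu_\al + \eps^2 L_\al^2\big)\,a_t + \eps^2\,n\sigma^2\norm{W-W_D}^2.
\]

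Setting $\rho = 1 - 2\eps\mu_\al + \eps^2 L_\al^2$ and $B = \eps^2 n\sigma^2\norm{W-W_D}^2$, once $T$ is large enough that $\rho\in[0,1)$, unrolling gives $a_T \leq \rho^T a_0 + B/(1-\rho)$. I would first force $\eps L_\al^2 \leq \mu_\al$, which implies $\rho \leq 1 - \eps\mu_\al$ and $1-\rho \geq \eps\mu_\al$; substituting $\eps = c_1/T^{3\delta/2}$ and $\al = c_2/T^{\delta/2}$ (so $\mu_\al = c_2\mu/T^{\delta/2}$) together with $L_\al \leq 2 + c_2 L$ shows this holds as soon as $T^{\delta} \geq c_1(2+c_2 L)^2/(c_2\mu)$, the third term of $T_1$. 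The steady-state contribution then obeys $B/(1-\rho) \leq \eps\,n\sigma^2\norm{W-W_D}^2/\mu_\al = \cO\!\big(c_1 n\sigma^2\norm{W-W_D}^2/(\mu c_2)\cdot T^{-\delta}\big)$, which is exactly \eqref{eq:lemma1}.

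The delicate step, and the main obstacle, is to verify that the transient term vanishes at least as fast as $T^{-\delta}$. Using $\eps\mu_\al \leq 1$ (which holds once $T \geq (c_1 c_2\mu)^{1/2\delta}$, the second term of $T_1$) we have $\rho^T a_0 \leq (1-\eps\mu_\al)^T a_0 \leq e^{-\eps\mu_\al T}a_0$, and since $\eps\mu_\al T = c_1 c_2\mu\,T^{1-2\delta}$ with $1-2\delta>0$ this decays faster than any polynomial. To pin down the threshold one must compare $e^{-c_1 c_2\mu T^{1-2\delta}}$ with $T^{-\delta}$, i.e.\ check $c_1 c_2\mu\,T^{1-2\delta}\geq\delta\ln T$; a clean sufficient condition is $T^{1-2\delta}\geq\ln T$, which, writing $T=e^y$ and using $e^{v}\geq v^2$ for $v\geq 0$ with $v=1/(1-2\delta)$, holds for all $T\geq e^{e^{1/(1-2\delta)}}$, the first term of $T_1$. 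Taking $T_1$ to be the maximum of these three thresholds makes both $\rho^T a_0$ and $B/(1-\rho)$ of order $T^{-\delta}$ and completes the proof.
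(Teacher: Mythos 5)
Your proof is correct and follows essentially the same route as the paper: interpret the update as SGD on $h_{\al}$, use unbiasedness (Assumption \ref{assump3}) plus strong convexity and smoothness of $h_{\al}$ to obtain the recursion $a_{t+1}\leq(1-2\eps\mu_{\al}+\eps^2 L_{\al}^2)a_t+\eps^2 n\sigma^2\norm{W-W_D}^2$, enforce $\eps L_{\al}^2\leq\mu_{\al}$, and unroll to balance the geometric-sum term $\cO\left(\eps n\sigma^2\norm{W-W_D}^2/\mu_{\al}\right)$ against the exponentially decaying transient. The only cosmetic difference is that the paper packages the unrolling step as a separate auxiliary lemma (Lemma \ref{lemma3}), whereas you carry it out inline, with the same three thresholds composing $T_1$.
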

\vspace{0.5mm}
 \begin{proof}
 See Appendix \ref{app_proof_of_lemma_1}.
 \end{proof}

Lemma \ref{lemma1} guarantees convergence of the proposed iterations according to the update in (\ref{eq:alg_update}) to the solution of the later-defined Problem (\ref{eq:hmin}). Loosely speaking, Lemma \ref{lemma1} ensures that $\bx_T$ is \textit{close} to $\bx^*_{\al}$ for large $T$. So, in order to capture the deviation of $\bx_T$ from the global optima $\bx^*$, it suffices to show that $\bx^*_{\al}$ is \textit{close} to $\bx^*$, as well. As the problem in \eqref{eq:hmin} is a penalized version of the original constrained program in \eqref{eq:main}, the solutions to these two problems should not be significantly different if the penalty coefficient $\alpha$ is small. We formalize this claim in the following lemma.

\begin{lemma}\label{lemma2}
Consider the distributed consensus optimization Problem (\ref{eq:main}) and the problem defined in (\ref{eq:hmin}). If Assumptions \ref{assump1}, \ref{assump2} and \ref{assump4} hold, then the difference between the optimal solutions to \eqref{eq:Fmin} and its penalized version \eqref{eq:hmin} is bounded above by
%%%%%
\begin{equation}  
\norm{\bx^*_{\al} - \bx^*} \leq  \cO \left( \frac{\sqrt{2n} c_2 D \left(3+ 2L/\mu\right)}{ 1-\beta}  \frac{1}{T^{\delta/2}} \right),
%\cO\left(\frac{(1-\beta)^{-1}}{T^{\frac{1-\delta}{4}}} \right),
\end{equation}
%%%%%
for $\al=\frac{c_2}{T^{\delta/2}}$ and $T\geq T_2$, where $c_2$ is a positive constant independent of $T$, $\delta \in (0,1/2)$ is an arbitrary constant, and 
\begin{equation} 
T_2 \coloneqq \text{max} \left\{ \left\lceil \left(\frac{c_2 L}{1+\lambda_n(W)}\right)^{\frac{2}{\delta}} \right\rceil , \left\lceil c^4_2 (\mu + L)^{\frac{2}{\delta}} \right\rceil  \right\}.
\end{equation}
\end{lemma}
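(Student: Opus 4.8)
The plan is to treat \eqref{eq:hmin} as the quadratic-penalty relaxation of the constrained problem \eqref{eq:Fmin} and to bound the gap between the two minimizers through the first-order optimality condition of the penalized problem, decomposed along and orthogonal to the consensus subspace. Write $\mathcal{C}=\text{null}(\bI-\bW)=\{[\bu;\dots;\bu]\}$ for the consensus subspace and let $P$ denote the orthogonal projection onto $\mathcal{C}$, so that $P\bx=[\bar\bx;\dots;\bar\bx]$ with $\bar\bx=\frac1n\sum_{i=1}^n\bx_i$. Since $\bx^*_\al$ is the unconstrained minimizer of the $\al\mu$-strongly convex $h_\al$, it satisfies $\gr h_\al(\bx^*_\al)=(\bI-\bW)\bx^*_\al+\al\gr F(\bx^*_\al)=0$. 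I would split $\bx^*_\al-\bx^*$ into its component in $\mathcal{C}$ and its component in $\mathcal{C}^\perp$; because $\bx^*\in\mathcal{C}$, the orthogonal part of $\bx^*_\al-\bx^*$ is exactly the disagreement part of $\bx^*_\al$ itself, and it suffices to bound each piece separately.

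First I would control the disagreement (orthogonal) component. As $\bI-\bW$ is symmetric its range is $\mathcal{C}^\perp$, on which its smallest nonzero eigenvalue is $1-\lambda_2(W)\geq 1-\beta$; feeding the optimality condition through the pseudoinverse on $\mathcal{C}^\perp$ gives $\norm{(\bI-P)\bx^*_\al}\leq\frac{\al}{1-\beta}\norm{\gr F(\bx^*_\al)}$. The main obstacle is to bound $\norm{\gr F(\bx^*_\al)}$ by a quantity independent of $T$ (hence of $\al$): a naive energy estimate only yields an $\cO(\sqrt\al)$ disagreement bound, which is too weak. To secure the sharp $\cO(\al)$ rate I would compare function values: optimality of $\bx^*_\al$ gives $h_\al(\bx^*_\al)\leq h_\al(0)=\al F(0)$, and dropping the nonnegative quadratic term yields $F(\bx^*_\al)\leq F(0)$. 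Summing the smoothness inequality $\norm{\gr f_i(\bx)}^2\leq 2L(f_i(\bx)-f^*_i)$ over $i$ and combining with this value comparison gives $\norm{\gr F(\bx^*_\al)}^2\leq 2L(F(\bx^*_\al)-F^*)\leq 2L(F(0)-F^*)=D^2$, so $\norm{\gr F(\bx^*_\al)}\leq D$ and $\norm{(\bI-P)\bx^*_\al}\leq\frac{\al D}{1-\beta}$.

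Next I would control the consensus component $P\bx^*_\al-\bx^*$. Projecting the optimality condition onto $\mathcal{C}$ annihilates the $(\bI-\bW)\bx^*_\al$ term (its range lies in $\mathcal{C}^\perp$), leaving $\sum_{i=1}^n\gr f_i(\bx^*_{i,\al})=0$, while $\widetilde\bx^*$ satisfies $\sum_{i=1}^n\gr f_i(\widetilde\bx^*)=0$. Using that the aggregate $\sum_i f_i$ is $n\mu$-strongly convex with minimizer $\widetilde\bx^*$, together with $L$-smoothness and Cauchy--Schwarz to pass from $\gr f_i(\bx^*_{i,\al})$ to $\gr f_i(\bar\bx^*_\al)$, yields $\norm{\bar\bx^*_\al-\widetilde\bx^*}\leq\frac{L}{\sqrt n\,\mu}\norm{(\bI-P)\bx^*_\al}$, and hence $\norm{P\bx^*_\al-\bx^*}\leq\frac{L}{\mu}\norm{(\bI-P)\bx^*_\al}$. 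Combining the two components gives $\norm{\bx^*_\al-\bx^*}\leq(1+L/\mu)\frac{\al D}{1-\beta}$, which after substituting $\al=c_2/T^{\delta/2}$ is of the claimed order $\cO\big(\frac{\sqrt{2n}\,c_2 D(3+2L/\mu)}{1-\beta}T^{-\delta/2}\big)$ (the stated constants reflect a looser bookkeeping of the same quantities, and the threshold $T\geq T_2$ is precisely what keeps $\al$ small enough for these estimates and the implied lower-order terms to hold). I expect the second step---extracting the $T$-independent gradient bound $\norm{\gr F(\bx^*_\al)}\leq D$ so as to obtain a linear-in-$\al$ rather than $\sqrt\al$ rate---to be the crux of the argument.
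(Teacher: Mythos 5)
Your proof is correct, but it takes a genuinely different route from the paper's. The paper never works with the optimality condition of $h_{\al}$ directly: it introduces the auxiliary sequence $\bu_{t+1}=\bW\bu_t-\al\gr F(\bu_t)$ (unit-step gradient descent on $h_{\al}$, which is exactly the DGD iteration started at $\bu_0=0$), bounds $\norm{\bu_T-\bx^*_{\al}}$ by the linear contraction of GD on the $\al\mu$-strongly convex $h_{\al}$, bounds $\norm{\bu_T-\bx^*}$ by invoking the DGD error bound of Corollary 9 in \cite{yuan2016convergence}, and combines the two via the triangle inequality once the exponentially decaying terms are absorbed. That detour is precisely where the threshold $T_2$ comes from (both ingredients need $\al\leq\min\left\{\frac{1+\lambda_n(W)}{L},\frac{1}{\mu+L}\right\}$) and where the constant $\sqrt{2n}\,(3+2L/\mu)$ originates. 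You instead analyze the stationarity condition $(\bI-\bW)\bx^*_{\al}+\al\gr F(\bx^*_{\al})=0$ through the orthogonal decomposition along the consensus subspace: projecting onto $\mathcal{C}$ annihilates $(\bI-\bW)\bx^*_{\al}$ and yields $\sum_i\gr f_i(\bx^*_{i,\al})=0$; restricting to $\mathcal{C}^\perp$, where the spectrum of $\bI-\bW$ is bounded below by $1-\lambda_2(W)\geq 1-\beta$, yields the disagreement bound; and your two key estimates both check out --- the value comparison $F(\bx^*_{\al})\leq F(0)$ combined with $\norm{\gr f_i(\bx)}^2\leq 2L\left(f_i(\bx)-f_i^*\right)$ gives exactly $\norm{\gr F(\bx^*_{\al})}\leq D$ with the paper's $D^2=2L\sum_i\left(f_i(0)-f_i^*\right)$, and the $n\mu$-strong convexity of the aggregate gives $\norm{P\bx^*_{\al}-\bx^*}\leq(L/\mu)\norm{(\bI-P)\bx^*_{\al}}$. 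Your route buys three things: it is self-contained (no appeal to the external DGD analysis), it needs no smallness condition on $\al$ at all --- so the threshold $T\geq T_2$ is superfluous for your argument --- and it yields the sharper constant $(1+L/\mu)\frac{\al D}{1-\beta}$ with no extra $\sqrt{n}$ factor, which trivially implies the stated $\cO$ bound. What the paper's approach buys is reusable machinery: the explicit non-asymptotic bound $B_1(T)$, with its exponential remainder terms made quantitative, is recycled later (e.g., to bound $\norm{\bx^*_{\al}}$ in the proof of Theorem \ref{thm2}), whereas your argument proves the lemma and nothing more --- though a triangle inequality on your final bound would recover that side product as well.
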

\vspace{0.5mm}
 \begin{proof}
 See Appendix \ref{app_proof_of_lemma_2}.
 \end{proof}

The result in Lemma \ref{lemma2} shows that if we set the penalty coefficient $\alpha$ small enough, i.e., $\al=\mathcal{O}(T^{-\delta/2})$, then the distance between the optimal solutions of the constrained problem in \eqref{eq:main} and the penalized problem in \eqref{eq:hmin} is of $\mathcal{O} \left(\frac{\alpha}{1-\beta} \right) $. %Hence, the gap between these solutions is proportional to the penalty coefficient $\alpha$ and the inverse of spectral gap $(1-\beta)$ which matches our expectation. To be more specific, if the graph is highly connected, i.e.,  $(1-\beta)$ is close to $1$, then  

Having set the main lemmas, now it is straightforward to prove the claim of Theorem \ref{thm1}. For the specified step-sizes $\eps$ and $\al$ and large enough iterations $T \geq T_0 \coloneqq \text{max} \left\{ T_1, T_2 \right\}$, Lemmas \ref{lemma1} and \ref{lemma2} are applicable and we have
%%%%%
\begin{align}\label{eq:result}
    \bE \left[\norm{\bx_{T} - \bx^*}^2\right] &= \bE \left[\norm{\bx_T - \bx^*_{\al} + \bx^*_{\al} - \bx^*}^2\right] \nonumber \\
    &\leq 2 \Expc \left[\norm{\bx_T - \bx^*_{\al}}^2 \right] + 2 \norm{\bx^*_{\al} - \bx^*}^2 \nonumber\\
    &\leq \cO\left(\frac{1}{T^{\delta}} \right) + \cO\left(\frac{1}{T^{\delta}} \right) \nonumber \\
    &= \cO\left(\frac{1}{T^{\delta}} \right),
\end{align}
%%%
where we used $\norm{\mathbf{a+b}}^2 \leq 2 \big( \norm{\mathbf{a}}^2 + \norm{\mathbf{b}}^2 \big)$ to derive the first inequality; and the constants can be found in the proofs of the two lemmas. Since $ \bE \Big[\norm{\bx_{i,T} - \widetilde{\bx}^*}^2\Big]\leq  \bE \Big[\norm{\bx_{T} - \bx^*}^2\Big]$ for any $i=1,\dots,n$, the inequality in \eqref{eq:result} implies the claim of Theorem~\ref{thm1}.

 \subsection{Extension to more quantizers}
Based on the condition in Assumption~\ref{assump3}, so far we have been considering only unbiased quantizers for which the variance of quantization is bounded by a constant scalar, i.e., $\bE \left[ \|Q(\bx)-\bx\|^2 | \bx \right]\leq \sigma^2$. However, there are widely used representative quantizers where the quantization noise induced on the input is bounded proportionally to the input's magnitude, i.e., $\bE \left[ \|Q(\bx)-\bx\|^2 | \bx \right]\leq \mathcal{O} \left(\|\bx\|^2\right)$ \cite{alistarh2017qsgd}. 

Indeed, this condition is more challenging since the set of iterates norm $\|\bx_t\|$ are not necessarily bounded, and we cannot uniformly bound the variance of  the noise induced by quantization. In this subsection, we show that the proposed algorithm is converging with the same rate for quantizers satisfying this new assumption. Let us first formally state this assumption.

\begin{assumption}\label{assump5}
The random quantizer $Q(\cdot)$ is unbiased and its variance is proportionally bounded by the input's squared norm, that is,
%%%%%%
\begin{equation}
    \bE \left[Q(\bx)| \bx \right]=\bx, \quad \text{ and } \quad \bE \left[ \norm{Q(\bx)-\bx}^2 | \bx \right]\leq \eta^2 \norm{\bx}^2,
\end{equation}
%%%%%%
for a constant $\eta^2$ and any $\bx \in \mathbb{R}^{p}$; and quantizations are carried out independently on distributed nodes.
\end{assumption}

Before characterizing the convergence properties of the proposed QDGD method under the conditions in Assumption~\ref{assump5}, let us review a subset of quantizers that satisfy this condition.

\begin{example}[Low-precision quantizer] \label{ex:lp}
Consider the low precision quantizer $ Q^{\text{LP}}: \mathbb{R}^p \rightarrow \mathbb{R}^p$ which is defined as 
\begin{equation}\label{eq:low_per_def}
    Q^{\text{LP}}_i (\bx) = \norm{\bx} \cdot \text{sign}(x_i) \cdot \xi_i(\bx,s),
\end{equation}
where $ \xi_i(\bx,s) $ is a random variable defined as
\begin{equation}\label{eq:low_per_def_2}
    \xi_i(\bx,s) =
\left\{
	\begin{array}{ll}
	\frac{l}{ s}  &\quad \mbox{w.p.} \ \ 1- q\left(\frac{|x_i|}{\norm{\bx}},s\right), \vspace{2mm} \\ 
		\frac{l+1}{s} & \quad \mbox{w.p.} \ \ q\left(\frac{|x_i|}{\norm{\bx}},s\right), 
	\end{array}
\right.
\end{equation}
and $q(a,s)=as-l$ for any $a  \in [0,1]$. In above, the tuning parameter $s$ corresponds to the number of quantization levels and $l \in [0,s)$ is an integer such that $|x_i|/\norm{\bx} \in [l/s, (l+1)/s]$. It is not hard to check that \cite{alistarh2017qsgd}  the low precision quantizer $Q^{\text{LP}}$ defined in \eqref{eq:low_per_def} is an unbiased estimator of the vector $\bx$ and the variance is bounded above by
%%%%%
\begin{equation}\label{eq:low_prec_var}
    \mathbb{E} \left[\norm{ Q^{\text{LP}}(\bx) - \bx }^2  \right] \leq \text{min} \left( \frac{p}{s^2} , \frac{\sqrt{p}}{s} \right) \norm{\bx}^2.
\end{equation}
%%%%%
The bound in \eqref{eq:low_prec_var} illustrates the trade-off between communication cost and quantization variance. Choosing a large $s$ reduces the variance of quantization at the cost of increasing the levels of quantization and therefore increasing the communication cost. 
\end{example}

The following example provides another quantizer which satisfies the conditions in Assumption~\ref{assump5}.

%{\color{red}{Please mention a few quantizers that satisfy this condition}}
\begin{example}[Gradient sparsifier]
The gradient sparsifier denoted by $ Q^{\text{GS}}: \mathbb{R}^p \rightarrow \mathbb{R}^p$ is defined as 
\begin{equation}
    Q^{\text{GS}}_i (\bx) =
\left\{
	\begin{array}{ll}
		x_i / q_i  & \mbox{w.p. } q_i, \\
		0 & \mbox{otherwise},
	\end{array}
\right.
\end{equation}
where $q_i$ is probability that coordinate $i \in [p]$ is selected. It is easy to verify that this quantizer is unbiased, as for each $i$, $\mathbb{E} \left[Q^{\text{GS}}_i (\bx) \right] = x_i$. Moreover, one can show that the variance of this quantizer is bounded as follows,
\begin{equation}
    \mathbb{E} \left[ \norm{ Q^{\text{GS}}(\bx) - \bx }^2 \right] = \sum_{i=1}^{p} \left( \frac{1}{q_i} - 1\right) x_i^2 \leq \left( \frac{1}{q_\text{min}} - 1\right) \norm{\bx}^2,
\end{equation}
where $q_\text{min}$ denotes the minimum of probabilities $\{q_1,\cdots,q_p\}$. 
\end{example}

In the following theorem, we extend our result in Theorem~\ref{thm1} to the case that variance of quantizer may not be uniformly bounded and is proportional to the squared norm of quantizer's input.

\begin{theorem}\label{thm2}
Consider the distributed consensus optimization Problem (\ref{eq:main}) and suppose Assumptions \ref{assump1}, \ref{assump2}, \ref{assump4}, \ref{assump5} hold. Then, for each node $i$, the expected squared difference between the output of the QDGD method outlined in Algorithm \ref{alg:QDC} and the optimal solution  of Problem (\ref{eq:main}), i.e. $\widetilde{\bx}^*$ is upper bounded by
%%%%
\begin{align}\label{thm_2_claim}
\bE \Big[\norm{\bx_{i,T} - \widetilde{\bx}^*}^2\Big] &\leq \mathcal{O} \Bigg( \Bigg(  \frac{4n c^2_2 D^2 \left(3+ 2L/\mu\right)^2 }{(1-\beta)^2} \nonumber\\
 &\quad   \quad   + \frac{4 c_1 {  n}\widetilde{B}^2 \eta^2 \norm{W-W_D}^2}{\mu c_2} \Bigg)\frac{1}{T^{\delta}} \Bigg),
\end{align}
%%%%
for $\eps=\frac{c_1}{T^{3\delta/2}}$, $\al = \frac{c_2}{T^{\delta/2}}$, any $\delta \in (0,1/2)$ and $T \geq \widetilde{T}_0$, where $c_1$, $c_2$ and $\widetilde{T}_0$ are positive constants independent of $T$, and
%%%%
\begin{align}
    \widetilde{B}^2 = \frac{4 c^2_2 D^2 \left(3+ 2L/\mu\right)^2 }{(1-\beta)^2} + \frac{4(f_0 - f^*)}{\mu}. 
\end{align}
%%%%
\end{theorem}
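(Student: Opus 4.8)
The plan is to mirror the proof of Theorem~\ref{thm1} by decomposing the error via the triangle inequality as $\bE[\norm{\bx_{i,T}-\widetilde{\bx}^*}^2] \leq 2\bE[\norm{\bx_T - \bx^*_\al}^2] + 2\norm{\bx^*_\al - \bx^*}^2$, exactly as in \eqref{eq:result}. The second term is handled verbatim by Lemma~\ref{lemma2}, since that lemma invokes only Assumptions~\ref{assump1}, \ref{assump2}, \ref{assump4} and makes no reference to the quantizer's variance structure; it contributes the $\frac{4n c_2^2 D^2 (3+2L/\mu)^2}{(1-\beta)^2}\frac{1}{T^\delta}$ piece unchanged. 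Thus all the new work concentrates on re-deriving the analogue of Lemma~\ref{lemma1} under Assumption~\ref{assump5} in place of Assumption~\ref{assump3}.

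The crux is that the proof of Lemma~\ref{lemma1} surely controls the descent-direction error through a term of the form $\eps^2 \cdot (\text{quantization variance})$, and under Assumption~\ref{assump3} that variance is the uniform constant $\sigma^2$. Under Assumption~\ref{assump5} the per-node variance is instead $\eta^2 \norm{\bx_{i,t}}^2$, which is \emph{not} uniformly bounded a priori since the iterates need not stay bounded. First I would establish a uniform almost-sure (or in-expectation) bound on the iterate norms: I expect to show that for $T \geq \widetilde{T}_0$ the iterates satisfy $\bE[\norm{\bx_t}^2] \leq \widetilde{B}^2$ for all $t \leq T$, where $\widetilde{B}^2$ is precisely the quantity displayed in the theorem statement. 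The structure of $\widetilde{B}^2$ is telling: its first summand matches the Lemma~\ref{lemma2} bound on $\norm{\bx^*_\al}^2$ (via $\norm{\bx^*_\al - \bx^*}$ plus the fact that $\bx^* = [\widetilde{\bx}^*;\dots;\widetilde{\bx}^*]$), and the second, $\frac{4(f_0-f^*)}{\mu}$, is the standard radius bound coming from strong convexity and the initialization $\bx_0 = 0$. So the plan is to prove iterate boundedness inductively, using smoothness and strong convexity of $h_\al$ together with the contraction afforded by the step-sizes, and then substitute $\norm{\bx_{i,t}}^2 \leq \widetilde{B}^2$ into the variance term so that $\eta^2\norm{\bx_{i,t}}^2$ is replaced by the uniform surrogate $\eta^2 \widetilde{B}^2$.

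Once the variance is uniformly bounded by $\eta^2 \widetilde{B}^2$, the role previously played by $\sigma^2$ in Lemma~\ref{lemma1} is taken over by $\widetilde{B}^2 \eta^2$, and the remainder of the argument transfers line for line, yielding $\bE[\norm{\bx_T - \bx^*_\al}^2] \leq \mathcal{O}\!\left(\frac{c_1 n \widetilde{B}^2 \eta^2 \norm{W-W_D}^2}{\mu c_2}\frac{1}{T^\delta}\right)$. Combining this with the Lemma~\ref{lemma2} term through the same $\norm{\mathbf{a}+\mathbf{b}}^2 \leq 2(\norm{\mathbf{a}}^2+\norm{\mathbf{b}}^2)$ split and factor-of-$2$ bookkeeping produces \eqref{thm_2_claim}, with the factor $4$ (rather than $2$) on the second summand reflecting the extra doubling incurred in decoupling the now-coupled variance and iterate terms.

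The main obstacle, and the step deserving the most care, is the self-referential nature of the iterate-norm bound: the variance bound needed to control the iterates depends on $\widetilde{B}^2$, yet $\widetilde{B}^2$ is itself meant to bound the iterates. I would resolve this by a careful induction in which I assume $\bE[\norm{\bx_t}^2] \leq \widetilde{B}^2$ holds up to step $t$, then show the QDGD update in \eqref{eq:rule-matrix} together with the step-size choices $\eps = c_1 T^{-3\delta/2}$, $\al = c_2 T^{-\delta/2}$ propagates the bound to step $t+1$; the threshold $\widetilde{T}_0$ is exactly what guarantees the step-sizes are small enough for the inductive contraction to close. Verifying that $\widetilde{B}^2$ as defined is a valid fixed point of this recursion — i.e. that the growth contributed by the gradient step and the now-$\widetilde{B}^2$-dependent noise does not overshoot $\widetilde{B}^2$ — is the delicate quantitative heart of the argument, and I would expect it to pin down the precise form of $\widetilde{T}_0$.
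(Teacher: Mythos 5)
Your proposal is correct in its skeleton --- the decomposition $\bE[\norm{\bx_{T}-\bx^*}^2] \leq 2\bE[\norm{\bx_T-\bx^*_{\al}}^2] + 2\norm{\bx^*_{\al}-\bx^*}^2$, the unchanged reuse of Lemma~\ref{lemma2}, and the observation that only the Lemma~\ref{lemma1} analogue needs reworking all match the paper --- but you handle the unbounded variance by a genuinely different mechanism. You propose to first prove, by a self-referential induction, a uniform bound $\bE[\norm{\bx_t}^2] \leq \widetilde{B}^2$ on the iterates, and then run Lemma~\ref{lemma1} verbatim with $\sigma^2$ replaced by $\eta^2\widetilde{B}^2$. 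The paper never bounds the iterates at all: it recenters the variance at the penalized minimizer, writing $\norm{\bx_t}^2 = \norm{\bx_t - \bx^*_{\al} + \bx^*_{\al}}^2 \leq 2\norm{\bx_t-\bx^*_{\al}}^2 + 2\norm{\bx^*_{\al}}^2$, absorbs the error-dependent piece into the contraction coefficient (so the recursion has coefficient $1 - 2\mu_{\al}\eps + \eps^2\bigl(L^2_{\al} + 2\eta^2\norm{W-W_D}^2\bigr)$, which is exactly why $\widetilde{T}_1$ demands the smaller step size $\eps \leq \mu_{\al}/\bigl(L_{\al}^2 + 2\eta^2\norm{W-W_D}^2\bigr)$), and bounds the remaining constant term once and for all via $\norm{\bx^*_{\al}}^2 \leq 2\norm{\bx^*_{\al}-\bx^*}^2 + 2\norm{\bx^*}^2 \leq 2B_1(1) + 4n(f_0-f^*)/\mu \eqqcolon n\widetilde{B}^2$, using the explicit bound $B_1(T)$ from the proof of Lemma~\ref{lemma2} and strong convexity. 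So in the paper $\widetilde{B}^2$ is a bound on $\norm{\bx^*_{\al}}^2/n$ (a fixed deterministic quantity), not on the iterates; your structural reading of its two summands is right, but their role differs. Both routes work: your induction does close for large $T$ (the contraction plus $e_0 = \norm{\bx^*_{\al}}^2$ gives $\bE[\norm{\bx_t}^2] \lesssim \norm{\bx^*_{\al}}^2$ up to constants), but it recovers $\widetilde{B}^2$ only up to constant factors rather than as the exact displayed quantity, and it requires the fixed-point bookkeeping you flag as delicate. The paper's recentering trick buys you exactly what your induction was trying to manufacture --- a variance bound depending only on a fixed point rather than on the trajectory --- with no circularity and a crisp derivation of $\widetilde{T}_1$; if you adopt it, your remaining steps go through essentially line for line.
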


%%%%%%%
\begin{proof}
See Appendix \ref{app_proof_of_theorem_2}.
\end{proof}
%%%%%%

The result in Theorem~\ref{thm2} shows that under Assumption~\ref{assump5}, the proposed QDGD method converges to the optimal solution at a sublinear rate of $ \cO \left( {T^{-\delta}} \right)$ which matches the result in Theorem~\ref{thm1}. However, the lower bound on the total number of iterations $ \widetilde{T}_0$ for the result in Theorem~\ref{thm2} is in general larger than  $ {T}_0$ for the result in Theorem~\ref{thm1}. The exact expression of $ \widetilde{T}_0$ could be found in Appendix \ref{app_proof_of_theorem_2}.

\section{Optimal quantization level for reducing overall communication cost}\label{sec:extension}
In this section, we aim to study the trade-off between number of iterations until achieving a target accuracy and quantization levels. Indeed, by increasing quantization levels the variance of quantization reduces and the total number of iterations to reach a specific accuracy decreases, but the communication overhead of each round is higher as we have to transmit more bits. Conversely, if we use a quantization with a small number of levels the communication cost per iteration will be low; however, the total number of iterations could be very large. The fundamental question here is how to choose the quantization levels to optimize the overall communication cost which is the product of number of iterations and communication cost of each iteration.

In this section, we only focus on unbiased quantizers for which the variance is proportionally bounded with the  squared norm of the quantizer's input vector, i.e., for any $\bx \in \mathbb{R}^p$ it holds that $\mathbb{E}\left[Q(\bx) | \bx \right] = \bx$ and $\mathbb{E} \left[\| Q(\bx) - \bx\|^2  | \bx \right] \leq \eta^2 \norm{\bx}^2$ for some fixed constant $\eta$. Theorem \ref{thm2} characterizes the (order-wise) convergence of the proposed algorithm considering this assumption. More precisely,  using the result in Theorem \ref{thm2} and \eqref{thm_2_claim} we can write for each node $i$:
%%%%%
\begin{align} \label{eq:minT}
    & \quad \bE \left[  \norm{\bx_{i,T} - \widetilde{\bx}^*}^2\right]  \nonumber\\ 
  & \leq \left[ \frac{4n c^2_2 D^2 \left(3+ 2L/\mu\right)^2 }{(1-\beta)^2}   + \frac{4 c_1 {  n}\widetilde{B}^2 \eta^2 \norm{W\!-\!W_D}^2}{\mu c_2}  \right] \frac{1}{T^{\delta}},
\end{align}
%%%%%
where the approximation is due to considering dominant terms in $B_1(T)$ and $B_2(T)$ (See Appendix \ref{app_proof_of_lemma_2} and \ref{app_proof_of_theorem_2} for notations and details of derivations). Therefore, given a target relative deviation error $\rho$ and using \eqref{eq:minT}
, the algorithm needs to iterate at least $T(\rho)$ where
%%%%%
\begin{align}\label{eq:T-eps}
     T(\rho) &:= \Bigg[ \frac{4n c^2_2 D^2 \left(3+ 2L/\mu\right)^2 }{(1-\beta)^2}  \nonumber\\ 
     &\quad\quad + \frac{4 c_1 {  n}\widetilde{B}^2 \eta^2 \norm{W-W_D}^2}{\mu c_2} \Bigg]^{1/\delta} \left( \frac{1}{\rho \norm{\widetilde{\bx}^*}^2} \right) ^{1/\delta}.
\end{align}

It is shown in \cite{alistarh2017qsgd} that for the low-precision quantizer defined in \eqref{eq:low_per_def} and \eqref{eq:low_per_def_2} there exists an encoding scheme $\text{Code}_s$ such that for any $\bx \in \mathbb{R}^p$ and $s^2 + \sqrt{p} \leq p/2$, the communication cost of the quantized vector satisfies 
%%%%%
\begin{align}\label{eq:code-length}
    &\mathbb{E} \left[| \text{Code}_s(Q^{\text{LP}}(\bx)) | \right]  \nonumber\\
    & \quad \quad   \leq b + \left( 3 + \frac{3}{2} \log^* \left( \frac{2(s^2+p)}{s^2+\sqrt{p}} \right) \right) (s^2+\sqrt{p}),
\end{align}
%%%%%
where $\log^*(x)=\log(x) + \log \log (x) + \cdots=(1+o(1))\log (x)$ and $b$ denotes the number bits for representing one floating point number ($b \in \{32,64\}$ are typical values). For large $s$, \cite{alistarh2017qsgd} also proposes a simple encoding scheme $\text{Code}'_s$ which is proved to impose no more than the following communication cost on the quantized vector
%%%%%
\begin{align}\label{eq:code-prime-length}
    &\mathbb{E} \left[| \text{Code}'_s(Q^{\text{LP}}(\bx)) | \right]  \nonumber\\
    & \quad \leq b +\left(\frac{5}{2}+ \frac{1}{2}  \log^* \left(1+ \frac{s^2+\min(d,s \sqrt{p})}{p} \right)\right) p.
\end{align}
%%%%%

Now we can easily derive the expected total communication cost (in bits) of a quantized decentralized consensus optimization in order for each agent to achieve a predefined target error. For instance, assume that the low-precision quantizer described above is employed for the quanization operations. Using this quantizer, the expected communication cost (in bits) for transmitting a single $p$-dimensional real vector is represented in (\ref{eq:code-length}) and (\ref{eq:code-prime-length}) for two sparsity regimes of the tuning parameter $s$. 

On the other hand, in order for each agent to obtain a relative error $\rho$, the proposed algorithm iterates $T(\rho)$ times as denoted in (\ref{eq:T-eps}). Therefore, the total (expected) communication cost across all of the $n$ agents is upper-bounded by $n T(\rho)\cdot \mathbb{E} \left[| \text{Code}_s(Q^{\text{LP}}(\bx)) | \right] $ and  $n T(\rho) \cdot \mathbb{E} \left[| \text{Code}'_s(Q^{\text{LP}}(\bx)) | \right] $ for small and large $s$, respectively.

\begin{remark}
We can derive the total communication cost for the vanilla DGD method (\cite{yuan2016convergence}), as well. DGD method updates the iterations as follows:
%%%%%
\begin{equation}\label{eq:DGDwQ2}
    \bx_{i,t+1} =   w_{ii} \bx_{i,t} + \sum_{ j \in  \mathcal{N}_i } w_{ij} \bx_{j,t} - \al \gr f_i(\bx_{i,t}),
\end{equation}
%%%%
where $\al = c/\sqrt{T}$ is the stepsize. DGD guarantees the following convergence rate for strongly convex objectives:
%%%%%
\begin{align}
    \norm{\bx_{i,T} - \widetilde{\bx}^*}^2 
    &\leq
    \frac{(3 + 2L/\mu)^2 D^2}{(1-\beta)^2} \al^2 \nonumber\\
    &= 
    \frac{c^2(3 + 2L/\mu)^2 D^2}{(1-\beta)^2} \frac{1}{T}.
\end{align}
%%%%
Hence, to reach the $\rho$ approximation of the global optimal, DGD requires the total number of iterations
%%%%%
\begin{align}
    T_{\text{DGD}}(\rho) = \frac{c^2(3 + 2L/\mu)^2 D^2}{(1-\beta)^2} \frac{1}{\rho \norm{\widetilde{\bx}^*}^2}.
\end{align}
%%%%
Given that each decision vector requires $b p$ number of bits in an implementation of DGD (without quantization), the  DGD method induces the communication cost of $n T_{\text{DGD}}(\rho) b p$.
\end{remark}

In the following, we numerically evaluate the communication cost of the proposed QDGD method for the following  least squares problem 
%%%%%
\begin{equation} \label{eq:leasetsqr}
    \min_{\bx \in \mathbb{R}^p} f(\bx) = \sum_{i=1}^{n} \frac{1}{2} \norm{\bA_i \bx - \bb_i}^2.
\end{equation}
%%%
We assume that the network contains $n=50$ agents that collaboratively aim to solve problem (\ref{eq:leasetsqr}) over the real field of size $p=200$. The elements of the random matrices $\mathbf{A}_i\in \mathbb{R}^{p\times p}$ and the solution $\widetilde{\bx}^*$ are picked from the normal distribution $\mathcal{N}(0,1)$. Moreover, we let $\bb_i = \mathbf{A}_i \widetilde{\bx}^* +\mathcal{N}(0,0.1 I_p)$. All nodes update their local variables with respect to the proposed algorithm and send the quantized updates to the neighbors using a low-precision quantizer with $s$ quantization levels and $b=64$ bits for representing one floating point number, until they satisfy the predefined relative error $\rho = 10^{-2}$. The underlying graph is an  Erd\H{o}s-R\'enyi with edge probability $p_c=0.35$.  The edge weight matrix is picked as $W = I - \frac{2}{3 \lambda_{\text{max}}(\mathbf{L})} \mathbf{L}$ where $\mathbf{L}$ is the Laplacian with $\lambda_{\text{max}}(\mathbf{L})$ as its largest eigenvalue. We also set $\delta=0.1$.

%%%%%
\begin{center}
\begin{table}[t]  
  \begin{tabular}{cccc}
    \toprule
            \begin{tabular}{@{}c@{}}\# quantization  \\ levels \end{tabular}
           & 
           \begin{tabular}{@{}c@{}}\# iterations  \\ ($\times 10^2$)\end{tabular}
           &
           \begin{tabular}{@{}c@{}} code length   \\ per vector (bits)\end{tabular}
           &
           \begin{tabular}{@{}c@{}}communication cost   \\ (bits) ($\times 10^{7}$)\end{tabular} \\ \midrule
    $s=1$    & $10800$   & $216.9$   & $1171$  \\
    $s=50$   & $11.6$   & $949.8$   & $5.5$  \\
    $s^*=77$ & $9.91$   & $1062$   & $5.27$   \\
    $s=10^3$   & $8.79$  & $1793$  & $7.88$ \\
    $s=10^5$   & $8.78$  & $3122$  & $13.71$ \\
    $s=10^{10}$   & $8.78$  & $6443$  & $28.3$ \\
    $s=10^{15}$   & $8.78$  & $9765$  & $42.9$ \\
    $s=10^{19}$   & $8.78$  & $12420$  & $54.56$ \\ \bottomrule
  \end{tabular}
  \vspace{.2 cm}
  \caption{Quantization-communication trade-off for least squares problem}\label{tab:trade-off}
    \vspace{-.6 cm}
\end{table}
\end{center}
%%%%%

Table \ref{tab:trade-off} represents the total expected communication cost (in bits, as computed using (\ref{eq:T-eps}), (\ref{eq:code-length}) and (\ref{eq:code-prime-length}))   induced by the proposed algorithm to solve (\ref{eq:leasetsqr}) using the low-precision quantizer --as described above-- for four representative cases. As observed from this table and expected from the theoretical derivations, larger number of quantization levels translates to less noisy quantization and hence fewer iterations. Also, larger number of quantization levels induces more communication cost for each transmitted quantized data variable which results in larger code length per vector. However, the average total communication cost does not necessarily follow a monotonic trend. As Table \ref{tab:trade-off} shows, the optimal $s^*=77$ induces the smallest total communication cost among all levels $s \geq 1$. Moreover, Table \ref{tab:trade-off} demonstrates the significant gain of picking the optimal levels $s^*$ compared to the larger ones.

\iffalse
\begin{center}
\begin{table}[t]  
  \begin{tabular}{cccc}
    \toprule
            \begin{tabular}{@{}c@{}}\# quantization  \\ levels \end{tabular}
           & 
           \begin{tabular}{@{}c@{}}\# iterations  \\ ($\times 10^3$)\end{tabular}
           &
           \begin{tabular}{@{}c@{}} code length   \\ per vector (bits)\end{tabular}
           &
           \begin{tabular}{@{}c@{}}communication cost   \\ (bits) ($\times 10^8$)\end{tabular} \\ \midrule
    $s=1$    & $342.7$   & $216.9$   & $74.33$  \\
    $s=10$   & $16.4$   & $678.2$   & $11.12$  \\
    $s^*=34$ & $7.770$   & $860$   & $6.682$   \\
    $s=55$   & $7.188$  & $973.8$  & $7$ \\ \bottomrule
  \end{tabular}
  \vspace{.2 cm}
  \caption{Quantization-communication trade-off for least square problem}
    \vspace{-.6 cm}
\end{table}\label{tab:trade-off}
\end{center}
\fi

%!TEX root = main.tex
  \vspace{-.5 cm}
\section{Numerical Experiments}\label{sec:simulation}

In this section, we evaluate the performance of the proposed QDGD Algorithm on decentralized quadratic minimization and ridge regression problems and demonstrate the effect of various parameters on the relative expected error rate.  We carry out the simulations on artificial and real data sets corresponding to quadratic minimization  and ridge regression problems, respectively. In both cases, the graph of agents is a connected Erd\H{o}s-R\'enyi with edge probability $p_c$. We set the edge weight matrix to be $W = I - \frac{2}{3 \lambda_{\text{max}}(\mathbf{L})} \mathbf{L}$ where $\mathbf{L}$ is the Laplacian with $\lambda_{\text{max}}(\mathbf{L})$ as its largest eigenvalue.

\subsection{Decentralized quadratic minimization}
%!TEX root = main.tex
In this section, we evaluate the performance of the proposed QDGD Algorithm on minimizing a distributed quadratic objective. We pictorially demonstrate the effect of quantization noise and graph topology on the relative expected error rate. 

Consider the quadratic optimization problem 
%%%%%
\begin{equation} \label{eq:quadratic}
    \min_{ \bx \in \mathbb{R}^p} \ f(\bx) = \sum_{i=1}^{n} \frac{1}{2} \bx^\top \bA_i \bx +  \bb_i^\top \bx,
\end{equation}
%%%%%
where $f_i(\bx) = \frac{1}{2} \bx^\top \bA_i \bx +  \bb_i^\top \bx$ denotes the local objective function of node $i\in [n]$. The unique solution to (\ref{eq:quadratic}) is therefore $\widetilde{\bx}^* = -\left( \sum_{i=1}^{n} \bA_i\right)^{-1}\left(\sum_{i=1}^{n}\bb_i\right)$. We pick diagonal matrices $\bA_i$ such that $p/2$ of the diagonal entries of each $\bA_i$ are drawn from the set $\{1,2,2^2\}$ and the other $p/2$ diagonal entries are drawn from the set $\{1,2^{-1},2^{-2}\}$, all uniformly at random. Entries of vectors $\bb_i$ are randomly picked from the interval $(0,1)$. In our simulations, we let an additive noise model the quantization error, i.e. $Q(\bx)=\bx+ {\eta}$ where $\eta \sim \mathcal{N}(0,\frac{\sigma^2}{p} I_p)$.

We first consider  a connected Erd\H{o}s-R\'enyi graph of $n=50$ nodes and connectivity probability of $p_c=0.35$ and dimension $p=20$. Fig.~\ref{sigma_plot} shows the convergence rate corresponding to three values of quantization noise $\sigma^2 \in \{2,20,200\}$ and $\delta=3/8$, compared to {{the theoretical upper bound derived in Theorem \ref{thm1} in the logarithmic scale.}} For each plot, stepsizes are  pick as $\eps={c_1}/{T^{3\delta/2}}$ and $\al = {c_2}/{T^{\delta/2}}$ where the constants $c_1,c_2$ are finely tuned. As expected, Fig.~\ref{sigma_plot} shows that the error rate linearly scales with the quantization noise; however, it does not saturate around a non-vanishing residual, regardless the variance. Moreover, Fig.~\ref{sigma_plot} demonstrates that the convergence rate closely follows the upper bound derived in  Theorem \ref{thm1}. For instance, for the plot corresponding to $\sigma^2=200$, the relative errors are evaluated as ${e_{T_1}}/{e_0}=0.1108$ and  ${e_{T_2}}/{e_0}=0.0634$ for $T_1=800$ and $T_2=3200$, respectively. Therefore, ${e_{T_2}}/{e_{T_1}} \approx 0.57$ which is upper bounded by $(\frac{T_1}{T_2})^{\delta} \approx 0.59$.

To observe the effect of graph topology, quantization noise variance is fixed to $\sigma^2=200$ and we varied the connectivity ratio by picking three different values, i.e. $p_c \in \{0.35,0.5,1\}$ where $p_c=1$ corresponds to the complete graph case.  We also fix the parameter $\delta=3/8$ and accordingly  pick the stepsizes $\eps={c_1}/{T^{3\delta/2}}$ and $\al = {c_2}/{T^{\delta/2}}$ where the constants $c_1,c_2$ are finely tuned. As Fig. \ref{graph_plot} depicts, for the same number of iterations, deviation from the optimal solution tends to increase as the graph is gets sparse. In other words, even noisy information of the neighbor nodes improves the gradient estimate for local nodes. It also highlights the fact that regardless of the sparsity of the graph, the proposed QDGD algorithm guarantees the consensus to the optimal solution for each local node, as long as the graph is connected.\\

\begin{figure}[h]
\vspace{-.5cm}
\includegraphics[width=.5\textwidth ]{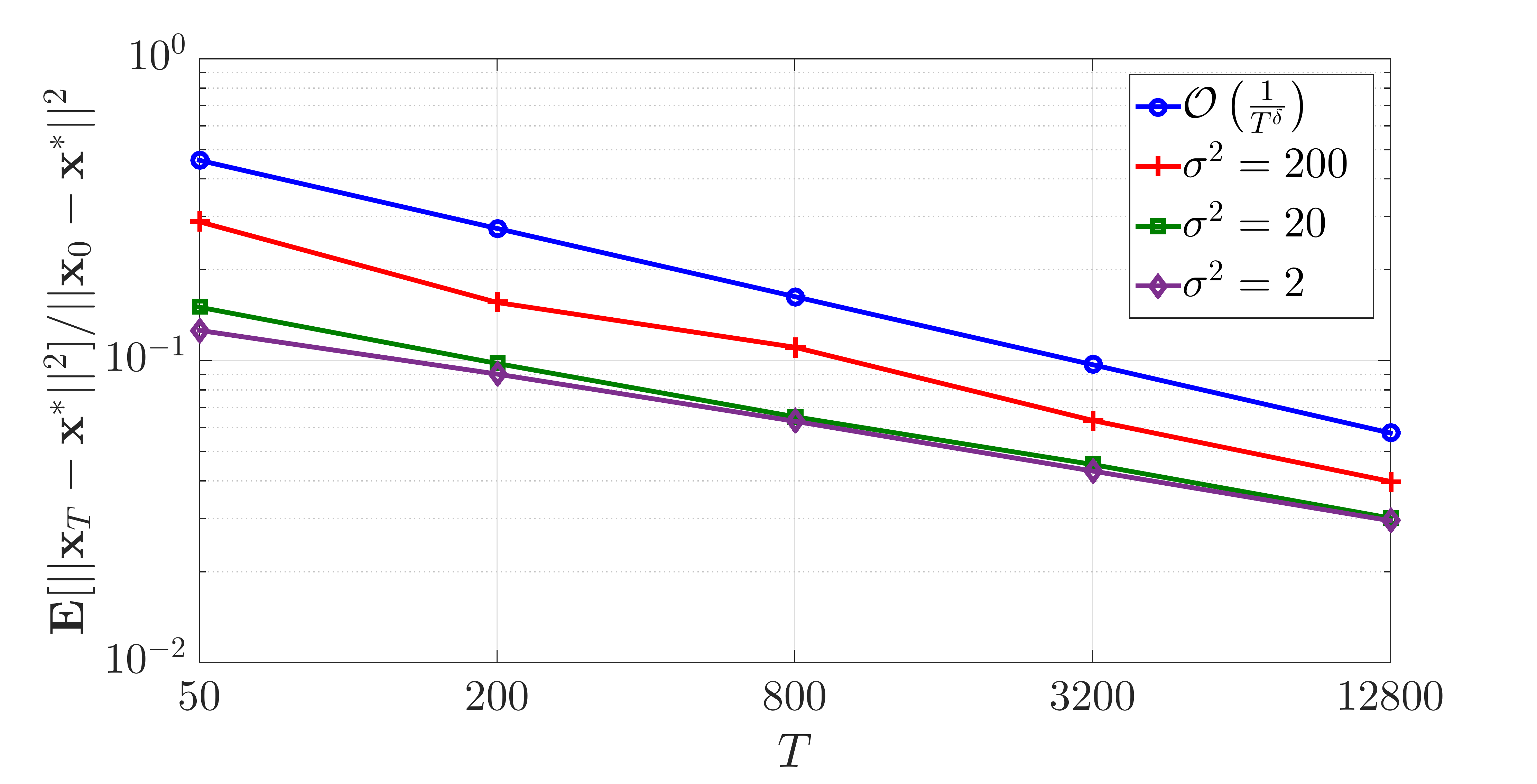}
\centering
\vspace{-.5cm}
\caption{Relative optimal squared error for three values of quantization noise variance: $\sigma^2 \in \{2,20,200\}$, compared with the order of upper bound.}
\label{sigma_plot}
\end{figure}

\begin{figure}[h]
\vspace{-.5cm}
\includegraphics[width=.5\textwidth ]{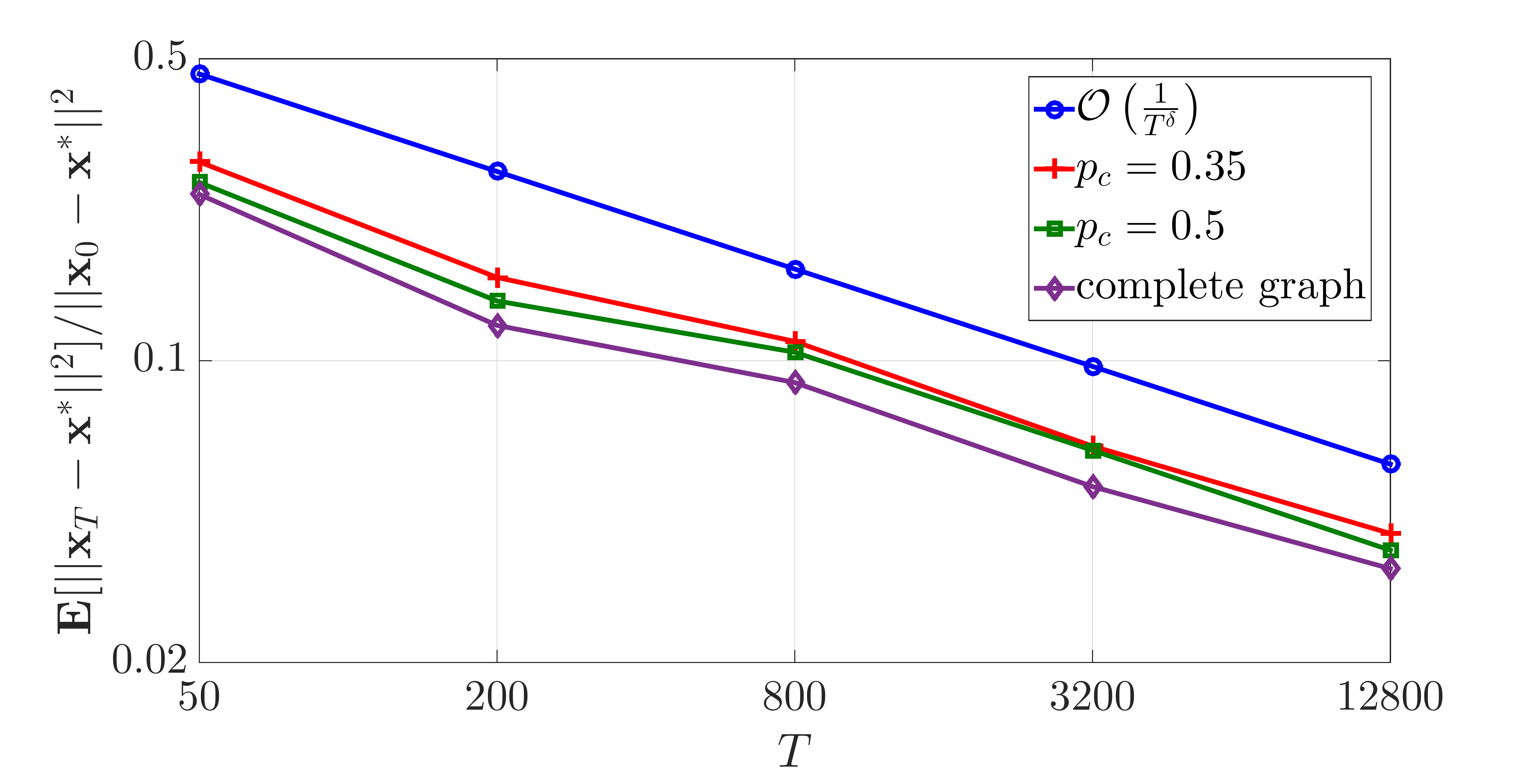}
\centering
\vspace{-.5cm}
\caption{Relative optimal squared error for three vales of graph connectivity ratio: $p_c \in \{0.35,0.5,1\}$, compared with the order of upper bound.}
\label{graph_plot}
\end{figure}

\subsection{Decentralized ridge regression}

Consider the ridge regression problem:
%%%%%
\begin{equation} \label{eq:ridgereg}
    \min_{\bx \in \mathbb{R}^p} f(\bx) = \sum_{j=1}^{D} \norm{\mathbf{a}_j \bx - b_j}^2 + \frac{\lambda}{2} \norm{\bx}_2^2,
\end{equation}
%%%%%
over the data set $\mathcal{D} = \{ (\mathbf{a}_j , b_j): j=1,\cdots,D\}$ where each pair $(\mathbf{a}_j , b_j)$ denotes the predictors-response variables corresponding to data point $j\in[D]$ where $\mathbf{a}_j \in \mathbb{R}^{1 \times p}, b_j \in \mathbb{R}$ and $\lambda > 0$ is the regularization parameter. To make this problem decentralized, we pick $n$ agents and uniformly divide the data set $\mathcal{D}$ among the $n$ agents, i.e., each agent is assigned with $d = D/n$ data points. Therefore, (\ref{eq:ridgereg}) can be decomposed as follows:
%%%%%
\begin{equation} \label{eq:ridgereg2}
    \min_{\bx \in \mathbb{R}^p} f(\bx) = \sum_{i=1}^{n} f_i(\bx),
\end{equation}
%%%%%
where the local function corresponding to agent $i \in [n]$ is
%%%%%
\begin{equation} \label{eq:ridgereg3}
    f_i(\bx) = \norm{\bA_i \bx - \bb_i}^2 + \frac{\lambda}{2n} \norm{\bx}^2,
\end{equation}
%%%%%
and 
%%%%%
\begin{align} \label{eq:ridgereg3}
    \bA_i &= [\mathbf{a}_{(i-1)d+1};\cdots;\mathbf{a}_{id}] \in \mathbb{R}^{d \times p},\\
    \bb_i &= [b_{(i-1)d+1};\cdots;b_{id}] \in \mathbb{R}^{d}.
\end{align}
%%%%%
The unique solution to (\ref{eq:ridgereg2}) is
%%%%%
\begin{equation} \label{eq:ridgereg4}
    \widetilde{\bx}^* = \left(\sum_{i=1}^{n} \bA^{\top}_i\bA_i + \lambda {I}\right)^{-1}\left(\sum_{i=1}^{n} \bA^{\top}_i\bb_i\right).
\end{equation}
%%%%%
To simulate  the decentralized ridge regression (\ref{eq:ridgereg2}), we pick ``Pen-Based Recognition of Handwritten Digits Data Set" \cite{Dua:2017} and use $D=5000$ training samples with $p=16$ features and $10$ possible labels corresponding to digits $\{\textrm`0\textrm', \textrm`1\textrm', \cdots, \textrm`9\textrm'\}$. We pick $\lambda=2$ and consider a connected Erd\H{o}s-R\'enyi graph with $n=50$ agents and edge probability $p_c$, i.e. each assigned with $d=100$ data points.  The decision variables are quantized according to the low-precision quantizer with quantization level $s$, as described in Example \ref{ex:lp}.

Firstly, we fix $p_c=0.25$ and $s=1$ and vary the tuning parameter $\delta$. Fig. \ref{sigma_plot_ridge} depicts the convergence trend corresponding to two values $\delta \in \{0.175,0.275\}$. For each pick of $\delta$, the stepsizes are set to $\eps={c_1}/{T^{3\delta/2}}$ and $\al = {c_2}/{T^{\delta/2}}$ with finely tuned constants $c_1,c_2$.

Secondly, to observe the effect of graph density, we let the quantization level be $s=1$ and vary the graph configuration. For $\delta=0.275$, Fig. \ref{graph_plot_ridge} shows the resulting convergence rates for Erd\H{o}s-R\'enyi random graphs with two vales of graph connectivity ratio $p_c \in \{0.25,0.45\}$, complete graph and cycle graph.

\begin{figure}[t]
\includegraphics[width=.5\textwidth ]{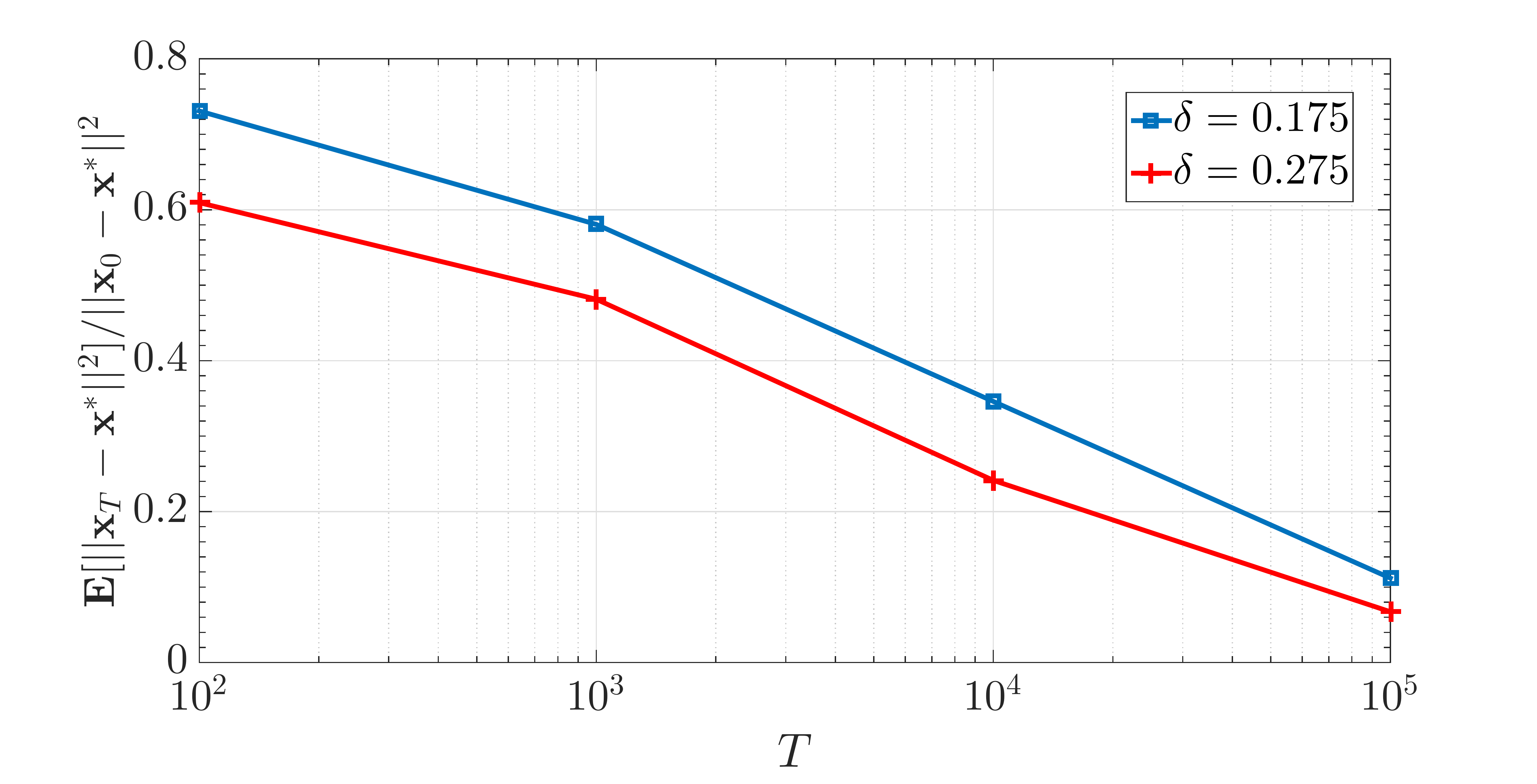}
\centering
\vspace{-.5cm}
\caption{Relative optimal squared error for two vales of $\delta$: $\delta \in \{0.175,0.275\}$.}
\label{sigma_plot_ridge}
\end{figure}

\begin{figure}[t]
\includegraphics[width=.5\textwidth ]{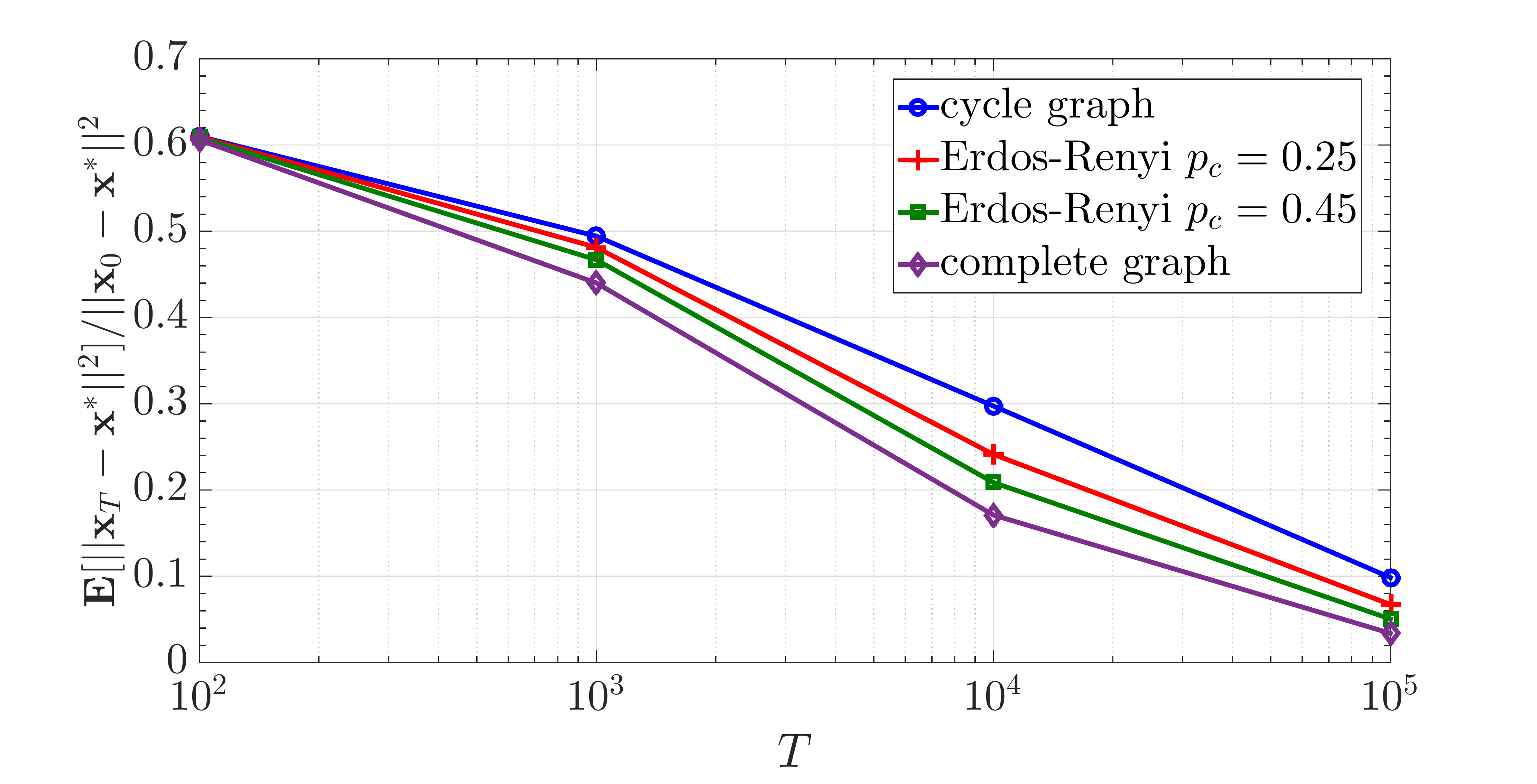}
\centering
\vspace{-.5cm}
\caption{Relative optimal squared error for Erd\H{o}s-R\'enyi random graphs with two vales of graph connectivity ratio: $p_c \in \{0.25,0.45\}$, complete graph and cycle graph.}
\label{graph_plot_ridge}
\end{figure}

\subsection{Logistic regression}
To further evaluate the proposed method with other benchmarks, in this section we consider the logistic regression where the goal is to learn a classifier $\bx$ to predict the labels $b_{j} \in \{+1,-1\}$. More specifically, consider the regularized logistic regression problem as follows:
%%%%%
\begin{equation} \label{eq:logisticreg}
    \min_{\bx \in \mathbb{R}^p} f(\bx) = 
    \frac{1}{n} \sum_{j=1}^{D} \log \left( 1 + \exp \left(- b_j \mathbf{a}_{j} \bx \right) \right)
    +
    \frac{\lambda}{2} \norm{\bx}_2^2,
\end{equation}
%%%%%
where $b_j \in \{+1,-1\}$ denotes the label of the $j$th data-point corresponding to the feature vector $\mathbf{a}_{j} \in \mathbb{R}^{1 \times p}$. The total $D$ data-points are distributed among the $n$ nodes such that each node is assigned with $d=D/n$ samples. The underlying network is an Erd\"{o}s-R\'enyi graph with $n=50$ nodes and connectivity probability $p_c=0.45$. We generate a data-set of $D=5000$ samples as follows. Each sample with label $+1$ is associated with a feature vector of $p=4$ random gaussian entries with mean $\mu$ and variance $\gamma^2$. Similarly, samples with labels $-1$ are associated with a feature vector of random gaussian entries with mean $-\mu$ and variance $\gamma^2$. We let $\mu=3$ and $\gamma^2=1$.

In the implementation of the QDGD method, we pick the parameter $\delta=0.45$ and accordingly  pick the stepsizes $\eps={c_1}/{T^{3\delta/2}}$ and $\al = {c_2}/{T^{\delta/2}}$ where the constants $c_1,c_2$ are finely tuned.

As a benchmark, we compare our proposed QDGD method with the naive DGD algorithm \cite{yuan2016convergence} in which we let the nodes exchange quantized decision variables. That is, the update rule at node $i$ and iteration $t$ in this benchmark is 
%%%%%
\begin{equation}\label{eq:DGDwQ}
    \bx_{i,t+1} =   w_{ii} \bx_{i,t} + \sum_{ j \in  \mathcal{N}_i } w_{ij} \bz_{j,t} - \al \gr f_i(\bx_{i,t}),
\end{equation}
%%%%
where we pick the stepsize $\al = c/T$ with finely tuned constant $c$.

In both methods, we use the low-precision quantizer in \eqref{eq:low_per_def} with $s$ levels of quantization. Note that unlike the proposed QDGD, the update rule in this benchmark employs only one step-size $\al$. In addition to this comparison, we illustrate the effect of the quantization level $s$ on the convergence of the two methods. Fig. \ref{logistic_plot} demonstrates the loss values resulting from the two methods for five picks of $T \in \{750,1000,1250,1500,1750\}$. As we mentioned earlier, the proposed QDGD is an exact method, i.e. the local models converge to the global optimal model with any desired optimality gap. However, a naive generalization of the existing methods (e.g. DGD) with quantization (e.g. in \eqref{eq:DGDwQ}) will result in a convergence to a neighborhood of the global optimal.

Fig. \ref{logistic_plot} also shows that for less noisy quantizers (larger $s$), nodes receive more accurate models from the neighbors and hence they achieve a smaller loss within a fixed number of iterations.

\begin{figure}[t]
\includegraphics[width=.5\textwidth ]{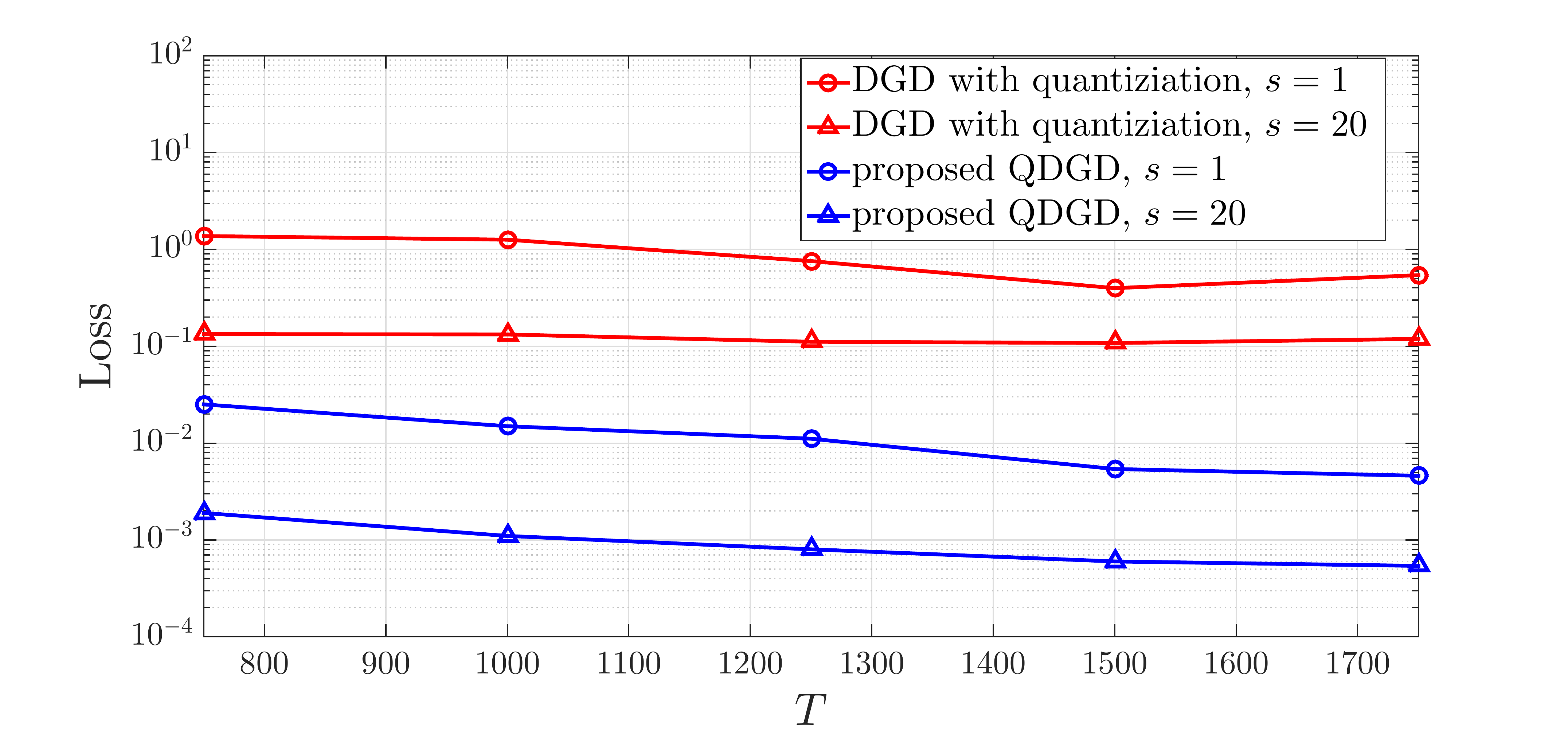}
\centering
\caption{Comparing the proposed QDGD method and the naive DGD with quantization (see \eqref{eq:DGDwQ}); and varying the quantizations levels $s \in \{1,20\}$.}
\label{logistic_plot}
\end{figure}

%!TEX root = main.tex
\section{Conclusion}\label{sec:conclusion}
We proposed the QDGD algorithm to tackle the problem of quantized decentralized consensus optimization. The algorithm updates the local decision variables by combining
the quantized messages received from the neighbors and the
local information such that proper averaging is performed
over the local decision variable and the neighbors' quantized
vectors. Under customary conditions for quantizers, we proved that the QDGD algorithm achieves a vanishing consensus error in mean-squared sense, and verified our theoretical results with numerical studies. Following our preliminary work \cite{reisizadeh2018quantized}, there has been a growing interest in developing quantized decentralized optimization methods \cite{doan2018accelerating,lee2018finite,zhang2018compressed}. In particular, in \cite{doan2018accelerating} authors propose to use \emph{adaptive} quantization which is kept tuned during the convergence. Authors in \cite{zhang2018compressed} relax the convexity assumption and develop another quantized method for a more general class of objective functions.

An interesting future direction is to establish a fundamental trade-off between the convergence rate of quantized consensus algorithms and the communication. More precisely, given a target convergence rate, what is the minimum number of bits that one should communicate in decentralized consensus? Another interesting line of research is to develop novel source coding (quantization) schemes that have low computation complexity and are information theoretically near-optimal in the sense that they have small communication load and fast convergence rate. Lastly, developing such communication-efficient decentralized optimization methods for convex or non-convex functions are highly critical given the rise of deep neural networks in the learning literature, which is another line in our future directions.

%!TEX root = main.tex

\begin{appendices}

\section{Proof of Lemma \ref{lemma1}}\label{app_proof_of_lemma_1}

To prove the claim in Lemma \ref{lemma1} we first prove the following intermediate lemma.

 \begin{lemma} \label{lemma3}
Consider the non-negative sequence $e_t$ satisfying the inequality 
\begin{equation}\label{eq:recursive_expression}
    e_{t+1}\leq \left(1-\frac{a}{T^{2\delta}}\right) e_t +\frac{b}{T^{3\delta}},
\end{equation}
for $t\geq 0$, where $a$ and $b$ are positive constants, $\delta\in[0,1/2)$, and $T$ is the total number of iterations. Then, after $T \geq \max \left\{ a^{1/(2\delta)}, \exp\left(\exp\left(1/\left(1-2\delta\right)\right)\right) \right\}$ iterations the iterate $e_T$ satisfies 
%%%%
\begin{equation}\label{lkjnjlknl}
    e_T \leq \mathcal{O}\left(\frac{b}{aT^{\delta}}\right).
\end{equation}
%%%%
\end{lemma}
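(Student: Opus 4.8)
The plan is to read \eqref{eq:recursive_expression} as a scalar affine recursion with constant contraction factor and unroll it. First I would set $q := 1 - a/T^{2\delta}$ and note that the hypothesis $T \geq a^{1/(2\delta)}$ forces $T^{2\delta} \geq a$, hence $0 \leq q < 1$; this makes $q$ a legitimate contraction factor and guarantees the geometric sums below converge. Iterating the inequality from $t=0$ to $t=T-1$ gives
$$e_T \leq q^{T} e_0 + \frac{b}{T^{3\delta}} \sum_{s=0}^{T-1} q^{s} \leq q^{T} e_0 + \frac{b}{T^{3\delta}} \cdot \frac{1}{1-q}.$$
Since $1-q = a/T^{2\delta}$, the second (steady-state) term is exactly $\frac{b}{a}\,T^{-\delta}$, already of the order claimed in \eqref{lkjnjlknl}. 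So the whole lemma reduces to showing that the transient term $q^{T} e_0$ does not dominate this, i.e.\ that it is also $\cO(T^{-\delta})$.

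For the transient term I would use $1-x \leq e^{-x}$ to write
$$q^{T} e_0 = \Big(1 - \frac{a}{T^{2\delta}}\Big)^{T} e_0 \leq e_0 \exp\!\big(-a\,T^{1-2\delta}\big).$$
Because $\delta < 1/2$ the exponent grows like a positive power $T^{1-2\delta}$ of $T$, so this bound decays faster than every fixed polynomial in $T$ and is in particular $\cO(T^{-\delta})$; adding it to $\frac{b}{a}T^{-\delta}$ leaves the overall order at $T^{-\delta}$, which is the assertion \eqref{lkjnjlknl}.

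The one step that is not mere bookkeeping is converting this asymptotic statement into the explicit double-exponential threshold, and I expect this to be the main obstacle. I would show that $T \geq \exp(\exp(1/(1-2\delta)))$ is precisely what yields $T^{1-2\delta} \geq \log T$: writing $\beta = 1-2\delta$ and $y = \log T$, the inequality $T^{\beta} \geq \log T$ is $e^{\beta y} \geq y$, and at the threshold value $y = e^{1/\beta}$ it reduces to $\beta^2 e^{1/\beta} \geq 1$, equivalently $e^{x} \geq x^{2}$ with $x = 1/\beta \geq 1$, which holds; a short derivative check shows the inequality then persists for all larger $T$. With $T^{1-2\delta} \geq \log T$ secured, $\exp(-a\,T^{1-2\delta})$ is controlled by a power of $T$, and the super-polynomial decay pins the transient term at $\cO(T^{-\delta})$, so that $e_T \leq \cO\big(b/(aT^{\delta})\big)$ with the hidden constant absorbing the dependence on $a$, $\delta$ and $e_0$. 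The remaining effort is the routine summation of the geometric series.
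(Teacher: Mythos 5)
Your proposal is correct and follows essentially the same route as the paper's own proof: unroll the recursion, dominate the finite geometric sum by the infinite one to obtain the steady-state term $\frac{b}{a}T^{-\delta}$, and bound the transient by $e_0\exp\left(-aT^{1-2\delta}\right)$ via $1-x\leq e^{-x}$, which is negligible since $\delta<1/2$. Your explicit verification that the threshold $T\geq\exp\left(\exp\left(1/(1-2\delta)\right)\right)$ forces $T^{1-2\delta}\geq\log T$ is in fact more careful than the paper, which merely asserts that the exponential term is negligible whenever $1-2\delta$ is of order $1/\log\log T$ or larger.
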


\begin{proof}
Use the expression in \eqref{eq:recursive_expression} for steps $t-1$ and $t$ to obtain
%%%%
\begin{align}
    e_{t+1} 
    & \leq  \left(1-\frac{a}{T^{2\delta}}\right)^2 e_{t-1} \nonumber\\ 
    & \quad +\left[1+\left(1-\frac{a}{T^{2\delta}}\right) \right]\frac{b}{T^{3\delta}},
\end{align}
%%%%%
where $T \geq a^{1/(2\delta)}$. By recursively applying these inequalities for all steps $t\geq 0$ we obtain that  
%%%%
\begin{align}
    & e_{t} \leq  \left(1-\frac{a}{T^{2\delta}}\right)^t e_{0} 
     \nonumber\\
     &+\frac{b}{T^{3\delta}} \left[1+\left(1-\frac{a}{T^{2\delta}}\right) +\dots +\left(1-\frac{a}{T^{2\delta}}\right)^{t-1} \right] \nonumber\\
     &\leq  \left(1-\frac{a}{T^{2\delta}}\right)^t e_{0} 
     +\frac{b}{T^{3\delta}} \left[\sum_{s=0}^{t-1}\left(1-\frac{a}{T^{2\delta}}\right)^{s} \right] \nonumber\\
     &\leq  \left(1-\frac{a}{T^{2\delta}}\right)^t e_{0} 
     +\frac{b}{T^{3\delta}} \left[\sum_{s=0}^{\infty}\left(1-\frac{a}{T^{2\delta}}\right)^{s} \right] \nonumber\\
     &=  \left(1-\frac{a}{T^{2\delta}}\right)^t e_{0} 
     +\frac{b}{T^{3\delta}} \left[\frac{1}{1- \left( 1-\frac{a}{T^{2\delta}} \right) } \right] \nonumber\\
     &=  \left(1-\frac{a}{T^{2\delta}}\right)^t e_{0} 
     +\frac{b}{aT^{\delta}}.  
\end{align}
%%%%%
Therefore, for the iterate corresponding to step $t=T$ we can write 
%%%%
\begin{align}
    e_{T} 
     &\leq 
      \left(1-\frac{a}{T^{2\delta}}\right)^T e_{0} 
     +\frac{b}{aT^{\delta}}  \nonumber\\
     &\leq 
      \exp \left({-aT^{(1-2\delta)}} \right) e_{0} 
     +\frac{b}{aT^{\delta}}  \label{eq:withexp} \\
     &=
      \mathcal{O}\left(\frac{b}{ aT^{\delta}}\right),
\end{align}
%%%%%
and the claim in \eqref{lkjnjlknl} follows. Note that for the last inequality we assumed that the exponential term in is negligible comparing to the sublinear term. It can be verified for instance if $1-2\delta$ is of $\mathcal{O} \left( 1/\log(\log(T)) \right)$ or greater than that, it satisfies this condition. Moreover, setting $\delta=1/2$ results in a constant (and hence non-vanishing) term in (\ref{eq:withexp}). 
\end{proof}

 Now we are at the right position to prove Lemma \ref{lemma1}. We start by evaluating the gradient function of $h_{\al}$ at the concatenation of local variables at time $t \geq 1$, that is $\gr h_{\al}(\bx_t)= \big(\bI - \bW \big) \bx_t + \al \gr F(\bx_t)$. Consider the vector $\bz_t=[\bz_{1,t};\dots;\bz_{n,t}]$ as the concatenation of the quantized variant of the local updates $\bx_t=[\bx_{1,t};\dots;\bx_{n,t}]$. Then, we obtain that the expression on the right hand side of \eqref{eq:rule-matrix}, i.e.,
 %%%%%
 \begin{equation}\label{eq:htilde}
     \widetilde{\gr}h_{\al}(\bx_t)=\big(\bW_{D} - \bW\big) \bz_{t} + \big(\bI - \bW_D \big) \bx_t + \al \gr F(\bx_{t}), 
 \end{equation}
 %%%%%
 defines a stochastic estimate of the true gradient of $h_{\al}$ at time $t$, i.e., $\gr h_{\al}(\bx_t)$. We let $\cF^t$ denote a sigma algebra that measures the history of the system up until time $t$ and take the conditional expectation $\Expc[\cdot|\cF^t]$ from both sides of (\ref{eq:htilde}). It yields
   %%%%%
 \begin{align}
     &\Expc \left[  \widetilde{\gr}h_{\al}(\bx_t) \vert \cF^t \right] \nonumber\\
     & \quad = \left(\bW_{D} - \bW \right) \Expc \left[\bz_{t}|\cF^t \right] + \left(\bI - \bW_D \right) \bx_t + \al \gr F(\bx_{t}), \nonumber\\
     & \quad = \left(\bI - \bW \right) \bx_t + \al \gr F(\bx_t)\nonumber\\
     & \quad = \gr h_{\al}(\bx_t),
 \end{align}
 %%%%%
 where we used the fact that $\Expc \left[\bz_{t}|\cF^t \right]=\bx_t$ (Assumption \ref{assump3}). Hence, $\widetilde{\gr}h_{\al}$ is an unbiased estimator for the true gradient $\gr h_{\al}$. Now, we can rewrite the update rule (\ref{eq:rule-matrix}) as 
 %%%%%
 \begin{equation} \label{eq:hsgd}
     \bx_{t+1} = \bx_{t} - \eps \widetilde{\gr}h_{\al}(\bx_t),
 \end{equation}
 %%%%%
 which resembles the stochastic gradient descent (SGD) update with step-size $\eps$ for minimizing the objective function $h_{\al}(\bx)$ over $\bx \in \mathbb{R}^{np}$. Intuitively, one can expect that, for proper pick of step-size, the the sequence $\{\bx_t;t=1,2,\dots\}$ produced by update rule (\ref{eq:hsgd}) converges to the unique minimizer of $h_{\al}(\bx)$. More precisely, we can write for $t\geq 1$,
 %%%%%
 \begin{align}
& \bE \left[ \norm{\bx_{t+1} - \bx^*_{\al}}^2 \vert \cF^t\right] \nonumber\\
&\quad = \bE \left[\norm{\bx_{t} - \eps \widetilde{\gr}h_{\al}(\bx_t) - \bx^*_{\al}}^2 | \cF^t\right]  \nonumber\\
 &\quad= \norm{\bx_{t} - \bx^*_{\al}}^2 -2 \eps \left\langle \bx_{t} - \bx^*_{\al}, \bE \left[\widetilde{\gr}h_{\al}(\bx_t)| \cF^t \right] \right\rangle \nonumber\\
 &\quad \quad + \eps^2 \bE \left[\norm{\widetilde{\gr}h_{\al}(\bx_t)}^2 | \cF^t\right]  \nonumber\\
 &\quad= \norm{\bx_{t} - \bx^*_{\al}}^2 -2 \eps \left\langle \bx_{t} - \bx^*_{\al}, {\gr}h_{\al}(\bx_t) \right\rangle \nonumber\\
 &\quad \quad + \eps^2 \bE \left[ \norm{\widetilde{\gr}h_{\al}(\bx_t)}^2 | \cF^t\right]  \nonumber\\
 &\quad\leq \left(1-2 \mu_{\al} \eps \right)\norm{\bx_{t} - \bx^*_{\al}}^2 + \eps^2 \bE \left[\norm{\widetilde{\gr}h_{\al}(\bx_t)}^2 | \cF^t\right]. \label{eq:bound1}
\end{align}
 %%%%%
 We have used the facts that $\widetilde{\gr}h_{\al}$ is unbiased and $h_{\al}$ is strongly convex with parameter $\mu_{\al}$. Next, we bound the second term in (\ref{eq:bound1}), that is
 %%%%%
\begin{align}
\bE & \left[ \norm{\widetilde{\gr}h_{\al}(\bx_t)}^2 \vert \cF^t\right] \nonumber\\
&= \bE \left[ \norm{ \left(\bW_{D} - \bW\right) \bz_{t} + \left(\bI - \bW_D \right) \bx_t + \al \gr F(\bx_{t})}^2 \vert \cF^t\right] \nonumber\\
&\leq \norm{{\gr}h_{\al}(\bx_t)}^2 + \bE \left[ \norm{ \left(\bW_{D} - \bW\right)( \bz_{t} -\bx_t)}^2  \vert \cF^t\right]  \nonumber\\
&\leq L^2_{\al} \norm{\bx_{t} - \bx^*_{\al}}^2 + { n}\sigma^2 \norm{W-W_D}^2, \label{eq:bound2}
\end{align}
%%%%%
where we used the smoothness of $h_{\al}$ and boundedness of quantization noise. Plugging (\ref{eq:bound2}) into (\ref{eq:bound1}) yields
%%%%%
\begin{align}
 \bE \left[\norm{\bx_{t+1} - \bx^*_{\al}}^2 | \cF^t\right]
 &\leq \left(1-2 \mu_{\al} \eps + \eps^2 L^2_{\al} \right)\norm{\bx_{t} - \bx^*_{\al}}^2 \nonumber\\
 & \quad + \eps^2 { n}\sigma^2 \norm{W-W_D}^2. \label{eq:bound3}
\end{align}
%%%%%
Let us define the sequence $e_t := \bE \left[\norm{\bx_{t} - \bx^*_{\al}}^2 \right]$ as the expected squared deviation of the local variables from the optimal solution ${\bx_{\al}^*}$ at time $t \geq 1$. By taking the expectation of both sides of (\ref{eq:bound3}) with respect to all sources of randomness from $t=0$ we obtain that 
%%%%%
\begin{align}\label{eq:bound4}
 e_{t+1} &\leq  \big(1-2 \mu_{\al} \eps + \eps^2 L^2_{\al} \big)e_t + \eps^2 { n}\sigma^2 \norm{W-W_D}^2 \nonumber\\
 &= \big(1-\eps (2 \mu_{\al}  - \eps L^2_{\al}) \big)e_t + \eps^2 {  n}\sigma^2 \norm{W-W_D}^2.
\end{align}
%%%%%
Notice that for the specified choice of $\eps$ and $T\geq T_1$, we have $T^{\delta} \geq T_1^{\delta} \geq  \frac{c_1 (1+c_2 L)^2}{c_2 \mu} $ and therefore
%%%%%
\begin{align}
\eps &= \frac{c_1}{T^{3\delta/2}}\nonumber\\
&\leq \frac{c_2 \mu }{(1+c_2 L)^2} \cdot \frac{1}{T^{\delta/2}}\nonumber\\
&\leq \frac{\mu_{\al}}{ \big(1-\lambda_n(W) + \al L \big)^2 }\nonumber\\
&\leq \frac{\mu_{\al}}{L^2_{\al}}.
\end{align}
%%%%%
Therefore, (\ref{eq:bound4}) can be written as
%%%%%
\begin{align}\label{eq:bound5}
 e_{t+1} &\leq  \left(1-\eps \left(2 \mu_{\al}  - \eps L^2_{\al} \right) \right)e_t + \eps^2 {  n}\sigma^2 \norm{W-W_D}^2 \nonumber\\
 &\leq \left(1- \mu_{\al} \eps \right)e_t + \eps^2 {  n}\sigma^2\norm{W-W_D}^2 \nonumber\\
 &= \left(1- \frac{c_1 c_2 \mu}{T^{2\delta}} \right) e_t + \frac{c^2_1 {  n}\sigma^2 \norm{W-W_D}^2}{T^{3\delta}} .
\end{align}
%%%%%
Now we let $a=c_1 c_2 \mu$ and $b=c^2_1 {  n}\sigma^2 \norm{W-W_D}^2$ and employ Lemma \ref{lemma3} to conclude that 
%%%%%
 \begin{align}\label{eq:lemma1}
e_T &= \Expc \left[  \norm{\bx_T - \bx^*_{\al}}^2 \right]  \nonumber\\
 &\leq \mathcal{O}\left(\frac{b}{a T^{\delta}}\right) \nonumber\\
 & = \mathcal{O} \left( \frac{c_1 { n} \sigma^2 \norm{W\!-\!W_D}^2}{\mu c_2} \frac{1}{T^{\delta}} \right),
\end{align}
%%%%%
and the proof of Lemma \ref{lemma1} is complete.

%%%%%%%%%%%%%%%%%%%%%%%%%%%%%%%%
%%%%%%%%%%%%%%%%%%%%%%%%%%%%%%%%
%%%%%%%%%%%%%%%%%%%%%%%%%%%%%%%%
%%%%%%%%%%%%%%%%%%%%%%%%%%%%%%%%
%%%%%%%%%%%%%%%%%%%%%%%%%%%%%%%%
%%%%%%%%%%%%%%%%%%%%%%%%%%%%%%%%

\section{Proof of Lemma \ref{lemma2}}\label{app_proof_of_lemma_2}

First, recall the penalty function minimization in (\ref{eq:hmin}). Following sequence is the update rule associated with this problem when the gradient descent method is applied to the objective function $h_{\al}$ with the unit step-size $\gamma=1$,
%%%%%
\begin{equation}\label{eq:gd}
\bu_{t+1} = \bu_{t} - \gamma \gr h_{\al} (\bu_t)= \bW \bu_t - \al \gr F(\bu_t).
\end{equation}
%%%%%
From analysis of GD for strongly convex objectives, the sequence $\{ \bu_{t} : t=0,1,\cdots\}$ defined above exponentially converges to the minimizer of $h_{\al}$, $\bx^*_{\al}$, provided that $1=\gamma \leq \frac{2}{L_{\al}}$. The latter condition is satisfied if we make $\al \leq \frac{1+\lambda_n(W)}{L}$, implying $L_{\al} = 1-\lambda_n(W) + \al L \leq 2$. Therefore, 
%%%%%%
\begin{align}\label{eq:gd}
 \norm{\bu_{t} - \bx^*_{\al}}^2 &\leq (1- \mu_{\al})^{t}\norm{\bu_{0} - \bx^*_{\al}}^2 \nonumber\\
 &= (1- \al \mu)^{t}\norm{\bu_{0} - \bx^*_{\al}}^2.
\end{align}
%%%%%
\iffalse
If we take $\al=\frac{c_2}{T^{(1-\delta)/4}}$, then (\ref{eq:gd}) implies 
%%%%%
\begin{align}\label{eq:gd2}
    \norm{\bu_{T} - \bx^*_{\al}}^2 &\leq \cO\Big( (1-\frac{c_2}{T^{(1-\delta)/4}})^T\Big)\nonumber\\
    &=\cO \Big( e^{-c_2 T^{(3+\delta)/4}} \Big) = o(1).
\end{align}
%%%%%
{\color{red} We still don't know whether $\norm{\bu_{0} - \bx^*_{\al}}^2$ is bounded, so I think (45) can not be derived from (44) unless we assume $\norm{\bx^*} \leq B$ and do the following:\\
Let $\bu_{0}=0$. Then from (44), we have 
%%%%%
\begin{align}
    \norm{\bu_{T} - \bx^*_{\al}}^2 &\leq   e^{-c_2 T^{(3+\delta)/4}}   \norm{\bx^*_{\al}}^2 \\
    &\leq 2  e^{-c_2 T^{(3+\delta)/4}}  \Big( \norm{\bx^* - \bx^*_{\al}}^2 + \norm{\bx^*}^2 \Big) \\
    &\leq 2  e^{-c_2 T^{(3+\delta)/4}}  \Big( \norm{\bx^* - \bx^*_{\al}}^2 + B^2 \Big)
\end{align}
%%%%%
}
\fi
If we take $\bu_{0}=0$, then (\ref{eq:gd}) implies 
%%%%%
\begin{align}
    \norm{\bu_{T} - \bx^*_{\al}}^2 &\leq   (1- \al \mu)^{T}  \norm{\bx^*_{\al}}^2\nonumber \\
    &\leq 2  (1- \al \mu)^{T}  \left( \norm{\bx^* - \bx^*_{\al}}^2 + \norm{\bx^*}^2 \right) \nonumber\\
    &= 2  (1- \al \mu)^{T}  \left( \norm{\bx^* - \bx^*_{\al}}^2 + n \norm{\widetilde{\bx}^*}^2 \right) \label{eq:bounduT},
\end{align}
% \frac{2n(f_0 - f^*)}{\mu} 
%%%%%
where $f_0 = f(0)$ and $f^* = \text{min}_{\bx \in \mathbb{R}^p} f(\bx) = f(\widetilde{\bx}^*)$. On the other hand, it can be shown \cite{yuan2016convergence} that if $\al \leq \text{min} \left\{ \frac{1+\lambda_n(W)}{L} , \frac{1}{\mu  + L } \right\}$, then the sequence $\{ \bu_{t} : t=0,1,\cdots\}$ defined in (\ref{eq:gd}) converges to the $\cO \left(\frac{\al}{1-\beta} \right)$-neighborhood of the optima $\bx^*$, i.e.,
%%%%%
\begin{equation}\label{eq:gd3}
    \norm{\bu_{t} - \bx^*} \leq \cO\left(\frac{\al}{1-\beta} \right).
\end{equation}
%%%%%
If we take $\al=\frac{c_2}{T^{\delta/2}}$, the condition $T \geq T_2$ implies that $\al \leq \text{min} \left\{ \frac{1+\lambda_n(W)}{L} , \frac{1}{\mu  + L } \right\}$. Therefore, (\ref{eq:gd3}) yields
%%%%%
\begin{equation}\label{eq:gd4}
    \norm{\bu_{T} - \bx^*} \leq \cO \left(\frac{\al}{1-\beta} \right).
\end{equation}
%%%%%
More precisely, we have the following (See Corollary 9 in \cite{yuan2016convergence}):
%%%%%
\begin{equation} \label{eq:exactb}
    \norm{\bu_{T} - \bx^*} \leq \sqrt{n} \left(  c^T_3 \norm{\widetilde{\bx}^*} + \frac{c_4}{\sqrt{1-c^2_3}} + \frac{\al D}{1-\beta} \right),
\end{equation}
%%%%%
where
%%%%%
\begin{equation}
    c^2_3 = 1 - \frac{1}{2} \cdot \frac{\mu L}{\mu + L}  \al ,
    %= 1 - \frac{1}{2} \cdot \frac{\mu L}{\mu + L} \cdot \frac{c_2}{T^{(1-\delta)/	4}}  
\end{equation}
%%%%%
%%%%%
\begin{align}
	\frac{c_4}{\sqrt{1-c^2_3}} &= \frac{\al L D}{1-\beta} \sqrt{ 4 \left( \frac{\mu+L}{\mu L} \right)^2 - 2 \cdot \frac{\mu+L}{\mu L} \al } \nonumber \\
	& \leq \frac{2 \al  D }{  (1-\beta)} \left(1+L / \mu \right).
\end{align}
%%%%%
From (\ref{eq:exactb}) and (\ref{eq:gd4}), we have for $T \geq T_2$
%%%%%
\begin{align} \label{eq:boundT2}
   \norm{\bx^*_{\al} - \bx^*}^2 &= \norm{\bx^*_{\al} -\bu_{T}+\bu_{T}- \bx^*}^2 \nonumber\\
   &\leq 2\norm{\bx^*_{\al} -\bu_{T}}^2 + 2\norm{\bu_{T}- \bx^*}^2 \nonumber\\
   &\leq 4  (1- \al \mu)^{T}  \left( \norm{\bx^* - \bx^*_{\al}}^2 + n \norm{\widetilde{\bx}^*}^2 \right) \nonumber\\
   & \quad + 2n \Bigg(  \left( 1 - \frac{1}{2} \cdot \frac{\mu L}{\mu + L}  \al \right)^{T/2} \norm{\widetilde{\bx}^*} \nonumber\\
   & \quad +  \frac{\al D}{1-\beta} \left(3+ 2L / \mu \right) \Bigg)^2.
\end{align}
%%%%%
Note that for our pick $\al = \frac{c_2}{T^{\delta/2}}$, we can write
%%%%%
\begin{align} 
	(1- \al \mu)^{T} &\leq \exp\left( -c_2 T^{1-\delta/2} \right) \eqqcolon e_1(T),  \nonumber\\
	\left( 1 - \frac{1}{2} \cdot \frac{\mu L}{\mu + L}  \al \right)^{T/2} &\leq \exp \left( -\frac{1}{2} \cdot \frac{\mu L}{\mu + L}c_2 T^{1-\delta/2} \right)\nonumber\\
	& \eqqcolon e_2(T).
\end{align}
%%%%%
%%%%%
Therefore, from (\ref{eq:boundT2}) we have
%%%%%
\begin{align} \label{eq:boundT3}
   \norm{\bx^*_{\al} - \bx^*}^2  &\leq  \frac{1}{\left( 1 - 4e_1(T)  \right)} \Bigg\{ 4e_1(T) n \norm{\widetilde{\bx}^*}^2  \nonumber \\
   &\quad + 2n e^2_2(T) \norm{\widetilde{\bx}^*}^2 \nonumber \\
   &\quad+ 4n e_2(T) \norm{\widetilde{\bx}^*} \frac{\al D}{1-\beta} \left(3+ 2L / \mu \right) \nonumber \\
   &\quad+ 2n D^2 \left(3+ 2L / \mu \right)^2 \left( \frac{\al}{1-\beta}  \right)^2 \Bigg\} \nonumber \\
   &\leq  \frac{4n \left( 2 e_1(T) + e^2_2(T) \right)}{\left( 1 - 4e_1(T)  \right)} \frac{f_0 - f^*}{\mu} \nonumber \\
   &\quad + \frac{4 \sqrt{2} n e_2(T)}{\left( 1 - 4e_1(T)  \right)} \sqrt{ \frac{f_0 - f^*}{\mu} } \frac{\al D}{1-\beta} \left(3+ 2L / \mu \right) \nonumber \\
   &\quad+ \frac{2n D^2 \left(3+ 2L / \mu \right)^2}{\left( 1 - 4e_1(T)  \right)} \left( \frac{\al}{1-\beta}  \right)^2,
\end{align}
%%%%%
where we used the fact that $\norm{\widetilde{\bx}^*}^2 \leq 2(f_0 - f^*)/ \mu$. Let $B_1(T)$ denote the bound in RHS of (\ref{eq:boundT3}). Given the fact that the terms $e_1(T)$ and $e_2(T)$ decay exponentially, i.e. $e_1(T)=o\left( \al^2 \right)$ and $e_2(T)=o\left( \al^2 \right)$,  we have 
%%%%%
\begin{align} \label{eq:boundT4}
   \norm{\bx^*_{\al} - \bx^*} &\leq \cO \left( \sqrt{2n} D \left(3+ 2L/\mu\right) \left( \frac{\al}{1-\beta}  \right) \right) \nonumber \\
   &=  \cO \left( \frac{\sqrt{2n} c_2 D \left(3+ 2L/\mu\right)}{ 1-\beta}  \frac{1}{T^{\delta/2}} \right)
\end{align}
%%%%%
which concludes the claim in Lemma \ref{lemma2}.
Moreover, due to the exponential decay of the two terms $e_1(T)$ and $e_2(T)$, we have 
%%%%%
\begin{align} 
   B_1(T) &\approx 2n D^2 \left( 3+ 2L / \mu \right)^2 \left( \frac{\al}{1-\beta}  \right)^2 \\
   & = \frac{2n c^2_2 D^2 \left(3+ 2L/\mu\right)^2 }{(1-\beta)^2}  \frac{1}{T^{\delta}}.
\end{align}
%%%%%

%%%%%%%%%%%%%%%%%%%%%%%%%%%%%%%%
%%%%%%%%%%%%%%%%%%%%%%%%%%%%%%%%
%%%%%%%%%%%%%%%%%%%%%%%%%%%%%%%%
%%%%%%%%%%%%%%%%%%%%%%%%%%%%%%%%
%%%%%%%%%%%%%%%%%%%%%%%%%%%%%%%%
%%%%%%%%%%%%%%%%%%%%%%%%%%%%%%%%

\section{Proof of Theorem \ref{thm2}}\label{app_proof_of_theorem_2}

Note that the steps of the proof are similar to the one for Theorem \ref{thm1}. There, we derived the convergence rate of each worker, i.e. $\bE \Big[\norm{\bx_{i,T} - \widetilde{\bx}^*}^2\Big]$ by bounding two quantities $\Expc \Big[\norm{\bx_T - \bx^*_{\al}}^2 \Big]$ and $\norm{\bx^*_{\al} - \bx^*}$ as in Lemma \ref{lemma1} and \ref{lemma2} respectively. Here, replacing Assumption \ref{assump3} by Assumption \ref{assump5} acquires only the former quantity to revisit. 
From (\ref{eq:bound1}), we have that for $t\geq 1$,
 %%%%%
 \begin{align}
 \bE \left[ \norm{\bx_{t+1} - \bx^*_{\al}}^2 \vert \cF^t \right] &\leq (1-2 \mu_{\al} \eps)\norm{\bx_{t} - \bx^*_{\al}}^2 \nonumber\\
 & \quad + \eps^2 \bE \left[\norm{\widetilde{\gr}h_{\al}(\bx_t)}^2 | \cF^t\right]. \label{eq:newbound2}
\end{align}
 %%%%%
 Considering Assumption \ref{assump5}, the second term in RHS of (\ref{eq:bound2}) can be bounded as follows, 
 %%%%%
\begin{align}
\bE & \left[ \norm{\widetilde{\gr}h_{\al}(\bx_t)}^2 \vert \cF^t\right] \nonumber\\
&= \bE \left[ \norm{ \left(\bW_{D} - \bW\right) \bz_{t} + \left(\bI - \bW_D \right) \bx_t + \al \gr F(\bx_{t})}^2 \vert \cF^t\right] \nonumber\\
&\leq \norm{{\gr}h_{\al}(\bx_t)}^2 + \bE \left[ \norm{ \left(\bW_{D} - \bW\right)( \bz_{t} -\bx_t)}^2  \vert \cF^t\right]  \nonumber\\
&\leq L^2_{\al} \norm{\bx_{t} - \bx^*_{\al}}^2 + \eta^2 \norm{W-W_D}^2 \norm{\bx_{t}}^2 \nonumber\\
&= L^2_{\al} \norm{\bx_{t} - \bx^*_{\al}}^2 + \eta^2 \norm{W-W_D}^2 \norm{\bx_{t} - \bx^*_{\al} + \bx^*_{\al}}^2 \nonumber\\
& \leq \left( L^2_{\al} + 2 \eta^2 \norm{W - W_D} \right) \norm{\bx_{t} - \bx^*_{\al}}^2 \nonumber\\
& \quad + 2 \eta^2 \norm{W - W_D}^2 \norm{\bx^*_{\al}}^2. \label{eq:newg1}
\end{align}
%%%%%
Moreover, since the solution to Problem (\ref{eq:main}), i.e. $\norm{\widetilde{\bx}^*}$ (hence $\norm{{\bx}^*}$) is assumed to be bounded, the (unique) minimizer of $h_{\al}(\cdot)$, i.e. $\norm{{\bx}^*_{\al}}$ is also bounded as follows,
%%%%%
\begin{align}
\norm{\bx^*_{\al}}^2 &= \norm{\bx^*_{\al} - \bx^* + \bx^*}^2 \nonumber\\
& \leq 2\norm{\bx^*_{\al} - \bx^*}^2 + 2\norm{\bx^*}^2 \nonumber\\
&\leq 2 B_1(T) + \frac{4n(f_0 - f^*)}{\mu} \nonumber\\
&\leq 2 B_1(1) + \frac{4n(f_0 - f^*)}{\mu} \eqqcolon n \widetilde{B}^2. \label{eq:xstarb}
\end{align}
 %%%%%
 Plugging (\ref{eq:newg1}) and (\ref{eq:xstarb}) into (\ref{eq:newbound2}) yields
  %%%%%
 \begin{align}
 &\bE \left[\norm{\bx_{t+1} - \bx^*_{\al}}^2 | \cF^t \right] \nonumber\\
 &  \leq \left(1-2 \mu_{\al} \eps + \eps^2 \left(L^2_{\al} + + 2 \eta^2 \norm{W - W_D}^2 \right) \right)\norm{\bx_{t} - \bx^*_{\al}}^2 \nonumber\\
 & \quad + \eps^2 n \widetilde{B}^2 \norm{W-W_D}^2. \label{eq:newitr}
\end{align}
 %%%%%
 Let us pick
%%%%%
\begin{align}
    \widetilde{T}_1 \coloneqq & \text{max} \Bigg\{ e^{e^{\frac{1}{1-2\delta}}},  \left\lceil \left( c_1 c_2 \mu \right)^{1/(2\delta)} \right\rceil , \nonumber\\
    & \left\lceil \left( \frac{c_1 ((2+c_2 L)^2 + 2 \eta^2 \norm{W-W_D}^2)}{c_2 \mu} \right)^{1/\delta} \right\rceil \Bigg\}.
\end{align}
%%%%
For $ T \geq \widetilde{T}_1 $, we have 
%%%%%
\begin{align}
    \eps &= \frac{c_1}{T^{3\delta/2}}\nonumber\\
    &\leq \frac{c_2 \mu }{(2+c_2 L)^2 + 2 \eta^2 \norm{W-W_D}^2} \cdot \frac{1}{T^{\delta/2}}\nonumber\\
    &\leq \frac{\mu_{\al}}{ \big(1-\lambda_n(W) + \al L \big)^2 + 2 \eta^2 \norm{W-W_D}^2 }\nonumber\\
    &= \frac{\mu_{\al}}{L^2_{\al}+ 2 \eta^2 \norm{W-W_D}^2},
\end{align}
%%%%%
which together with (\ref{eq:newitr}) yields 
%%%%%
\begin{align}
    \bE \left[\norm{\bx_{t+1} - \bx^*_{\al}}^2 \right] &\leq (1- \mu_{\al} \eps ) \bE \left[\norm{\bx_{t+1} - \bx^*_{\al}}^2 \right] \nonumber\\
    & \quad + 2 \eps^2 n \widetilde{B}^2 \eta^2 \norm{W-W_D}^2.
\end{align}
%%%%%
Finally, from Lemma \ref{lemma3} with $a=c_1 c_2 \mu$ and $b=2c^2_1 {  n} \widetilde{B}^2 \eta^2 \norm{W-W_D}^2$, we have that
%%%%%
 \begin{align} \label{eq:boundTh2}
    \Expc \left[ \norm{\bx_T - \bx^*_{\al}}^2 \right]  
    &\leq \frac{2 c_1 {  n}\widetilde{B}^2 \eta^2 \norm{W-W_D}^2}{\mu c_2} \frac{1}{T^{\delta}}  \nonumber\\
    & \quad +  \exp\left( -c_1 c_2 \mu  T^{\delta}  \right) \sqrt{n} \widetilde{B}.
\end{align}
%%%%%
Let $B_2(T)$ denote the bound in RHS of (\ref{eq:boundTh2}). Due to the exponential decay of the second term in $B_2(T)$, we have 
%%%%%
 \begin{align}  \label{eq:bound2Th2}
    \Expc \left[ \norm{\bx_T - \bx^*_{\al}}^2 \right]  
    \leq \mathcal{O} \left( \frac{2 c_1 {  n}\widetilde{B}^2 \eta^2 \norm{W-W_D}^2}{\mu c_2} \frac{1}{T^{\delta}} \right),
\end{align}
%%%%%
and  
%%%%%
 \begin{align} 
    B_2(T) \approx \frac{2 c_1 {  n}\widetilde{B}^2 \eta^2 \norm{W-W_D}^2}{\mu c_2} \frac{1}{T^{\delta}}.
\end{align}
%%%%%
Hence, by putting (\ref{eq:bound2Th2}) together with Lemma \ref{lemma2} we conclude the claim for any $T \geq \widetilde{T}_0 \coloneqq \text{max} \left\{ \widetilde{T}_1, T_2 \right\}$.

\end{appendices}

\bibliographystyle{ieeetr}
\bibliography{biblio.bib}

\begin{IEEEbiography}[{\includegraphics[width=1in,height=1.25in,clip,keepaspectratio]{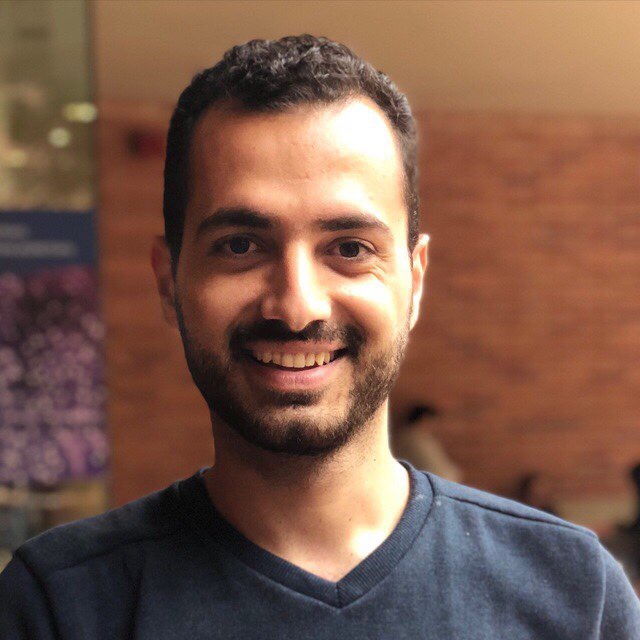}}]{Amirhossein Reisizadeh}
received his B.S. degree form Sharif University of Technology, Tehran, Iran in 2014 and an M.S. degree from University of California, Los Angeles (UCLA) in 2016, both in Electrical Engineering. He is currently a Ph.D. candidate in the Department of  Electrical and Computer Engineering at University of California, Santa Barbara (UCSB). He is interested in using information and coding-theoretic concepts to develop fast and efficient algorithms for large-scale machine learning, distributed computing and optimization. He was a finalist for the Qualcomm Innovation Fellowship.
\end{IEEEbiography}

\begin{IEEEbiography}[{\includegraphics[width=1in,height=1.25in,clip,keepaspectratio]{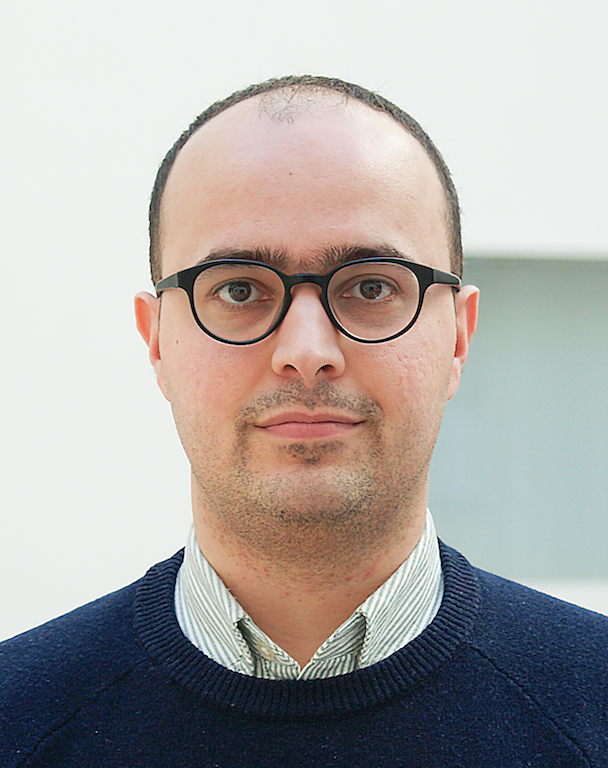}}]{Aryan Mokhtari}
 received the B.Sc. degree in electrical engineering from Sharif University of Technology, Tehran, Iran, in 2011, and the M.Sc. and Ph.D. degrees in electrical and systems engineering from the University of Pennsylvania (Penn), Philadelphia, PA, USA, in 2014 and 2017, respectively. He also received his A.M. degree in statistics from the Wharton School at Penn in 2017. He is currently an Assistant Professor in the Department of Electrical and Computer Engineering at the University of Texas at Austin, Austin, TX, USA. Prior to that, he was a Postdoctoral Associate in the Laboratory for Information and Decision Systems (LIDS) at the Massachusetts Institute of Technology (MIT), Cambridge, MA, USA, from January 2018 to July 2019. Before joining MIT, he was a Research Fellow at the Simons Institute for the Theory of Computing at the University of California, Berkeley, for the program on “Bridging Continuous and Discrete Optimization”, from August to December 2017. His research interests include the areas of optimization, machine learning, and signal processing. His current research focuses on the theory and applications of convex and non-convex optimization in large-scale machine learning and data science problems. He has received a number of awards and fellowships, including Penn’s Joseph and Rosaline Wolf Award for Best Doctoral Dissertation in electrical and systems engineering and the Simons-Berkeley Fellowship.
\end{IEEEbiography}

\begin{IEEEbiography}[{\includegraphics[width=1in,height=1.25in,clip,keepaspectratio]{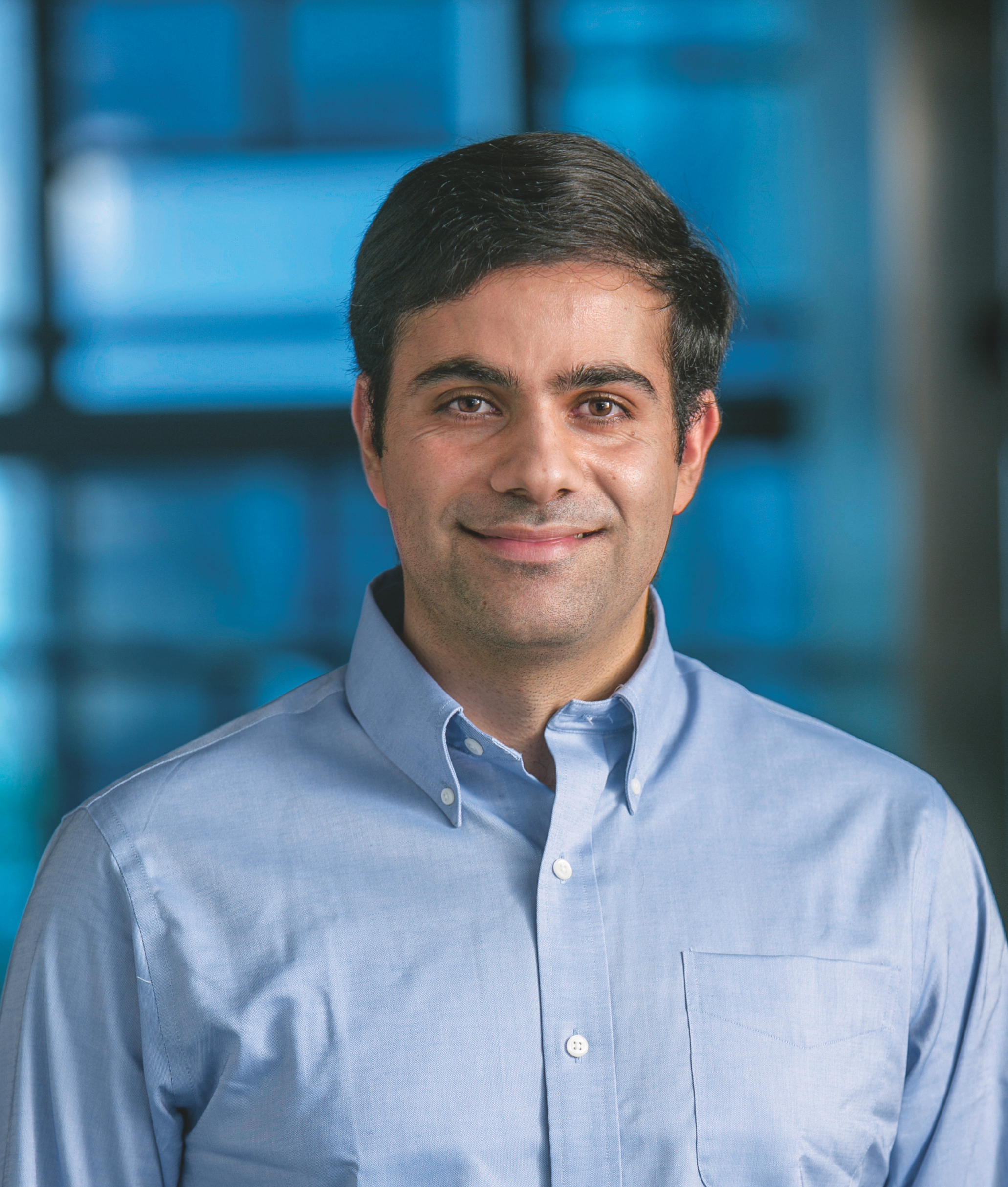}}]{Hamed Hassani}
 (IEEE member since 2010) is an assistant professor in the Department of Electrical and Systems Engineering (ESE) at the University of Pennsylvania. Prior to that, he was a research fellow at the Simons Institute at UC Berkeley, and a post-doctoral scholar in the Institute for Machine Learning at ETH Zurich. He obtained his Ph.D. degree in Computer and Communication Sciences from EPFL. For his PhD thesis he received the 2014 IEEE Information Theory Society Thomas M. Cover Dissertation Award. He also received the Jack K. Wolf Student paper award from the 2015 IEEE International Symposium on Information Theory (ISIT).  He has B.Sc. degrees in Electrical Engineering and Mathematics from Sharif University of Technology, Iran.
\end{IEEEbiography}

\begin{IEEEbiography}[{\includegraphics[width=1in,height=1.25in,clip,keepaspectratio]{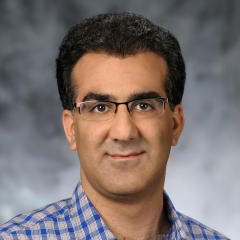}}]{Ramtin Pedarsani}
is an Assistant Professor in ECE Department at the University of California, Santa Barbara. He received the B.Sc. degree in electrical engineering from the University of Tehran, Tehran, Iran, in 2009, the M.Sc. degree in communication systems from the Swiss Federal Institute of Technology (EPFL), Lausanne, Switzerland, in 2011, and his Ph.D. from the University of California, Berkeley, in 2015. His research interests include machine learning, information and coding theory, networks, and transportation systems. Ramtin is a recipient of the IEEE international conference on communications (ICC) best paper award in 2014.
\end{IEEEbiography}

 \end{document}